\def\eqref#1{equation~\ref{#1}}
\def\Eqref#1{Equation~\ref{#1}}
\def\1{\bm{1}}
\def\vx{{\bm{x}}}
\DeclareMathAlphabet{\mathsfit}{\encodingdefault}{\sfdefault}{m}{sl}
\SetMathAlphabet{\mathsfit}{bold}{\encodingdefault}{\sfdefault}{bx}{n}
\newtheorem{theorem}{Theorem}
\newtheorem{myassumption}{Assumption}
\definecolor{lightblue}{HTML}{568CA9}
\definecolor{lightred}{HTML}{E26354}
\definecolor{lightgray}{HTML}{E4E4E4}
\title{Beyond Surface Structure: A Causal Assessment of LLMs' Comprehension Ability}
\author{%
Yujin Han$^{1}$\thanks{This work was done when Yujin was an intern at Shanghai Artificial Intelligence Laboratory.},
Lei Xu$^{2}$,
Sirui Chen$^{3}$,
Difan Zou$^{1}$,
Chaochao Lu$^{2}$\thanks{Corresponding author.}
}
\affil{\small{$^{1}$The University of Hong Kong, $^{2}$Shanghai Artificial Intelligence Laboratory, $^{3}$Tongji University}\\
\texttt{yujinhan@connect.hku.hk, dzou@cs.hku.hk},\\
\texttt{\{xulei, chensirui, luchaochao\}@pjlab.org.cn}}
\begin{document}

\maketitle

\begin{abstract}
Large language models (LLMs) have shown remarkable capability in natural language tasks, yet debate persists on whether they truly comprehend deep structure (i.e., core semantics) or merely rely on surface structure (e.g., presentation format). Prior studies observe that LLMs' performance declines when intervening on surface structure, arguing their success relies on surface structure recognition. However, surface structure sensitivity does not prevent deep structure comprehension. Rigorously evaluating LLMs' capability requires analyzing both, yet deep structure is often overlooked. To this end, we assess LLMs' comprehension ability using causal mediation analysis, aiming to fully discover the capability of using both deep and surface structures. Specifically, we formulate the comprehension of deep structure as direct causal effect (DCE) and that of surface structure as indirect causal effect (ICE), respectively. To address the non-estimability of original DCE and ICE --- stemming from the infeasibility of isolating mutual influences of deep and surface structures, we develop the corresponding quantifiable surrogates, including approximated DCE (ADCE) and approximated ICE (AICE). We further apply the ADCE to evaluate a series of mainstream LLMs (and the one with random weights), showing that most of them exhibit deep structure comprehension ability, which grows along with the prediction accuracy. Comparing ADCE and AICE demonstrates closed-source LLMs (e.g., GPT) rely more on deep structure, while open-source LLMs (e.g., Llama) are more surface-sensitive, which decreases with model scale. Theoretically, ADCE is a bidirectional evaluation, which measures both the sufficiency and necessity of deep structure changes in causing output variations, thus offering a more comprehensive assessment than accuracy, a common evaluation in LLMs. Our work provides new insights into LLMs' deep structure comprehension and offers novel methods for LLMs evaluation. The code for our project is available at \href{https://github.com/OpenCausaLab/ADCE}{https://github.com/OpenCausaLab/ADCE}.
\end{abstract}

\section{Introduction}
\label{section:intro}
Large language models (LLMs) have demonstrated unprecedented capability in various natural language tasks \cite{achiam2023gpt,touvron2023llama,touvron2023llama2,chowdhery2023palm,anil2023palm,team2023gemini}. Despite these achievements, there remains a debate over whether LLMs truly grasp the deep structure necessary for solving variations of the same problem, or if they simply learn the surface structure present in data. The distinction between surface and deep structure, defined in surface structure theory \cite{chomsky1971deep}, differentiates between observable sentence forms and the underlying semantic units that represent a question's core meaning. This distinction is further illustrated with examples in \cref{tab:question_comparison}. Many studies evaluating LLMs based on task-specific accuracy \cite{zeng2023evaluating,wang2023pandalm,chan2023chateval} often neglect their capacity to understand deep structures leading to correct solutions. This oversight may mislead model performance, as high accuracy might stem from learning surface structures in training data instead of deep structure. Such learning can lead spurious correlations between inputs and responses, limiting generalization to novel and realistic scenarios \cite{guo2024learning,jiang2024peek}.

Recent studies tend to understand surface structure beyond accuracy and indicate LLMs predominantly rely on surface structure to generate responses \cite{stolfo2022causal,hooda2024large,gonzalez2024does,guo2024learning,jiang2024peek}. Interventions unrelated to answers, like renaming entities \cite{jiang2024peek} or swapping code blocks \cite{hooda2024large}, decrease performance. This sensitivity to minor input changes suggests LLMs' task performance depends more on surface structure recognition \cite{hooda2024large,jiang2024peek}.
\begin{table}[t!]
\centering
\setlength{\abovecaptionskip}{0pt}  
\setlength{\belowcaptionskip}{1pt}  
\caption{Examples of two-digit multiplication with interventions on deep and surface structures: {\setlength{\fboxsep}{1pt}\colorbox{lightred}{deep structure}} embodies core semantics (e.g., numbers and operators), while {\setlength{\fboxsep}{1pt}\colorbox{lightblue}{surface structure}} encompasses linguistic forms (e.g., question format). Among given intervention strategies, changes in deep structure inherently alter surface structure. More examples on both structures in \cref{app:More Examples of Surface and deep structure}.}
\label{tab:question_comparison}
\vspace{0pt}  
\footnotesize
\setlength{\tabcolsep}{4pt}
\scalebox{1}{
\begin{tabular}{>{\raggedright\arraybackslash}p{2.5cm}>{\raggedright\arraybackslash}p{4.3cm}>{\raggedright\arraybackslash}p{4.3cm}>{\raggedright\arraybackslash}p{1.4cm}}
\toprule
Example Questions & Deep \& Surface Intervention & Surface Intervention Only & Strategy \\
\midrule
\multirow{8}{*}{%
\parbox{2cm}{
\colorbox{lightblue}{\textcolor{black}{\footnotesize What}}
\colorbox{lightblue}{\textcolor{black}{\footnotesize is}}
\colorbox{lightred}{\textcolor{black}{\footnotesize 50}}\\
\colorbox{lightred}{\textcolor{black}{\footnotesize times}}
\colorbox{lightred}{\textcolor{black}{\footnotesize 20}}
\colorbox{lightblue}{\textcolor{black}{\footnotesize ?}}
\vspace{0.5ex}
A:1000
}}  & \texttt{\fontsize{9}{10}\selectfont  What is $\langle$Mask$\rangle$ times 20? A:None} & \texttt{\fontsize{9}{10}\selectfont  What is 50 times 20$\langle$Mask$\rangle$ A:1000} & \multirow{2}{*}{\emph{Mask}} \\
\cmidrule{2-4}
 & \texttt{\fontsize{9}{10}\selectfont  How much is 10 multiplied by 50? A:500} & \texttt{\fontsize{9}{10}\selectfont  How much is 20 multiplied by 50? A:1000} & \multirow{2}{*}{\emph{Rephrase}} \\
\cmidrule{2-4}
 & \texttt{\fontsize{9}{10}\selectfont  What is * times 20? A:None} & \texttt{\fontsize{9}{10}\selectfont  What * 50 times 20? A:1000} & \multirow{2}{*}{\emph{Replace}} \\
\cmidrule{2-4}
 & \texttt{\fontsize{9}{10}\selectfont  50 is What times 20? A:2.5} & \texttt{\fontsize{9}{10}\selectfont  is What 50 times 20? A:1000} & \multirow{2}{*}{\emph{Swap}} \\
\bottomrule
\end{tabular}%
}
\vspace{-20pt} 
\end{table}
However, prior work has primarily focused on LLMs' sensitivity to surface structure, without adequately examining their comprehension of deep structure. 
\begin{wrapfigure}{r}{0.4\textwidth}
\vspace{0.0in}
\begin{center}
    \includegraphics[width=0.35\textwidth]{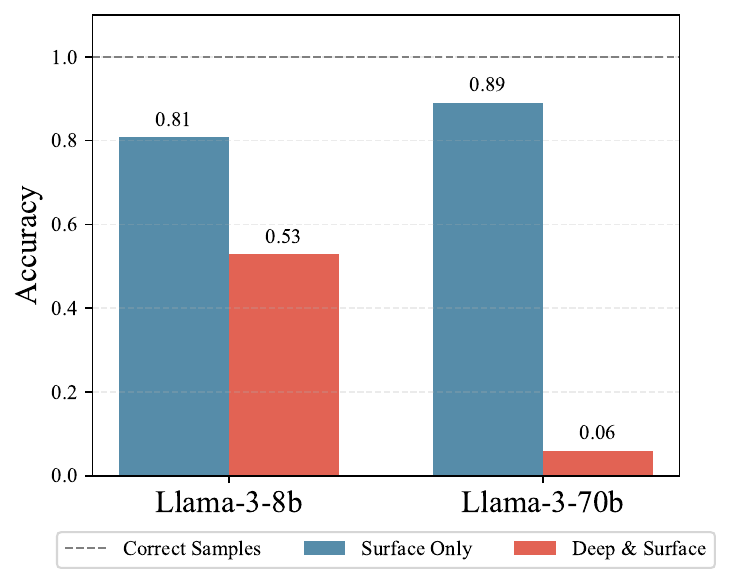}
\end{center}
\vspace{-0.2in}
\caption{Surface structure interventions cause subtle accuracy degradation relative to the obvious accuracy decline from deep structure changes.} 
\label{fig:motivation-llama3}  
\end{wrapfigure}
While sensitivity to surface-level interventions shows a lack of robustness to superficial changes, it does not necessarily preclude an understanding of deep structure. To ascertain whether LLMs are merely surface structure learners, a comparative analysis of their understanding of both deep and surface structures is essential, which has been largely overlooked in current research. To validate this hypothesis, we conduct the following experiment. Initially, LLMs reason on the complete dataset to identify correctly answered samples. Subsequently, using \emph{Mask} strategy (\cref{tab:question_comparison}), we create two intervention groups from the identified correct samples: one with interventions to both deep and surface structures, and another with only surface interventions. 
We then evaluate these intervened samples and compare the accuracy declines (Figure \ref{fig:motivation-llama3}). We observe that surface-only interventions cause slight accuracy decline, while combined surface and deep modifications result in significant performance degradation. This challenges the prevailing assumption that LLM responses are predominantly based on surface structure and suggests a more significant reliance on deep structure.  Given above observation and the prevalent oversight of deep structure understanding, we propose a fundamental research question:

\textit{Do LLMs genuinely comprehend deep structure for problem-solving, or do they primarily rely on learning surface structure?}

To address the issue, corresponding metrics are required, which should: (1) Quantify LLMs' understanding capabilities of deep and surface structures; (2) Be widely applicable across diverse tasks and LLMs, overcoming limitations of previous methods restricted to specific tasks (e.g., data flow problems in programming \cite{hooda2024large}, divisibility issues in mathematics \cite{gonzalez2024does}), specific data types (e.g., synthetic data with fixed textual templates \cite{jiang2024peek}), or specific models (e.g., small-sized transformers trained from scratch \cite{jinemergent}).

In this paper, we employ causal mediation analysis \cite{imai2010general,imai2010identification,hicks2011causal} to formulate LLMs' deep structure comprehension as the direct causal effect (DCE) of deep structure on outputs, and surface structure comprehension as the indirect causal effect (ICE) of surface structure on outputs. However, estimating DCE and ICE requires isolating the mutual influences between deep and surface structures, which is infeasible, e.g., the impossibility of modifying deep structure without altering surface structure. Consequently, we propose approximated DCE (ADCE) and approximated ICE (AICE) as proxies for DCE and ICE. ADCE and AICE empirically quantify LLMs' deep and surface structure comprehension across diverse tasks, revealing that LLMs' understanding beyond surface structures. Our method is widely applicable, independent of data or model constraints, thus suitable for diverse tasks and models. We summarize our key contributions as:



\textit{Methodologically}, we formalize LLMs' deep structure comprehension ability based on causal mediation analysis and propose an estimable approximated direct causal effect (ADCE) to quantify this ability. The proposed method also includes the approximated indirect causal effect (AICE) of surface structure, enabling comparison of LLMs' reliance on deep and surface structures (in \cref{section:method}).

\textit{Empirically}, we evaluate deep structure comprehension in mainstream LLMs across tasks, revealing widespread deep understanding that strongly correlates with accuracy (in \cref{section:LLMs' Deep Structure Understanding}). Further comparison between ADCE and AICE shows tested closed-source LLMs excel in deep comprehension, while tested open-source LLMs shift from surface to deep understanding with scale (in \cref{section: Deep vs. Surface}). 




\textit{Theoretically}, we prove ADCE evaluates both sufficiency and necessity of deep structure changes in output variations (in \cref{section: DCE-acc}), which offers a bidirectional assessment of LLM performance beyond output correctness, in contrast to the simple criteria like prediction accuracy. This theoretical point is supported by subsequent spurious correlation experiments (in \cref{section:Spurious Correlation}). This suggests that ADCE can serve as a more comprehensive assessment criterion to evaluate and understand the ability of LLMs (e.g., the dependence of LLM outputs on the core semantics of the inputs).


\begin{figure}[t!]
  \centering
  \includegraphics[width=0.8\textwidth, height=0.35\textheight]{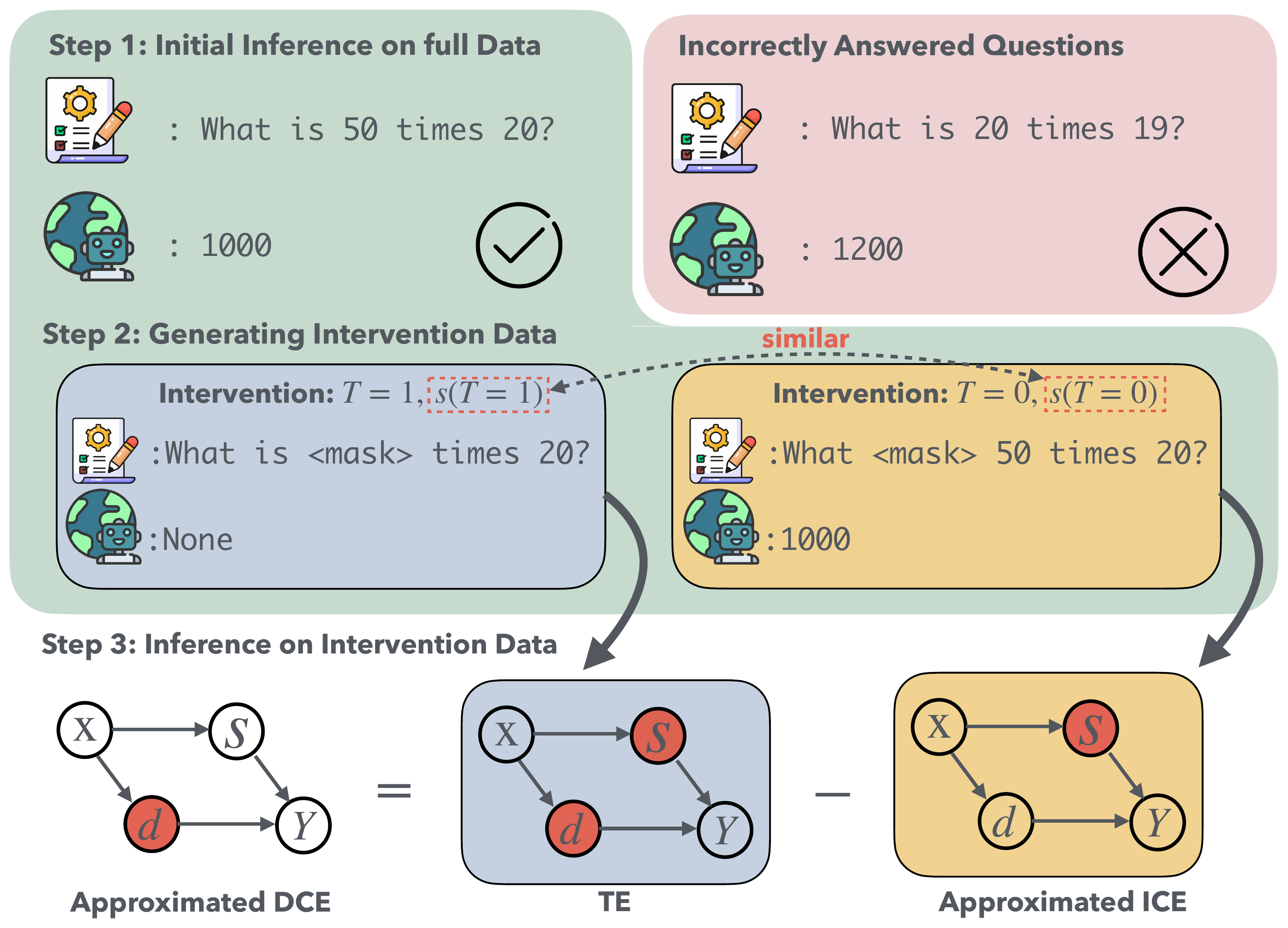}
\vspace{-0.05in}
  \caption{Approximated DCE (ADCE) quantifies LLMs' deep structure comprehension, while approximated ICE (AICE) measures surface structure understanding. Comparing them reveals LLMs' reliance on deep or surface structures. Our method involves: initial inference, intervention on correct samples, and secondary inference for ADCE and AICE calculation. More details are in \cref{alg: alg adce}.}
\vspace{-0.2in}
  \label{fig: pipeline}
\end{figure}

\section{A Causal Perspective of LLMs' Comprehension Ability}
\label{section:Preliminaries}
In this section, we define LLMs' deep structure comprehension ability by formulating it as a problem of estimating causal effects. We first introduce important notations for subsequent analysis. Consider a dataset \(\mathcal{D} = \{(\vx_i, y_i)\}_{i=1}^n\), where \(\vx_i\) denotes the \(i\)-th question and \(y_i\) represents the corresponding answer. Each question \(\vx_i := (d_i, s_i)\) can be split into two independent components \cite{stolfo2022causal}: the deep structure \(d_i\) and the surface structure \(s_i\), with $d_i \perp\!\!\!\perp s_i | \vx_i$. Given an LLM parameterized by $\theta \in {\Theta}$, denoted as $f_{\boldsymbol{\theta}}$, its output for $\vx_i$ is represented as $Y_i(\vx_i):=f_{\boldsymbol{\theta}}(\vx_i)$. 

\textbf{Comprehension Ability.} While high accuracy often indicates a high-performing model, our work delves into whether LLMs achieve this accuracy through a genuine understanding of deep structure. We propose that an LLM, $f_{\boldsymbol{\theta}}$, acting as a  ``deep thinker'', should not only provide correct answers but also fundamentally depend on deep structure for responses. Formally, let $\mathcal{D}_c \subseteq \mathcal{D}$ be a subset of questions correctly answered by $f_{\boldsymbol{\theta}}$. An LLM $f_{\boldsymbol{\theta}}$ possesses deep structure comprehension satisfy
\begin{align}
\label{eq:deep}
\mathds{1}_{Y(\vx'_i) = y_i} = 
\begin{cases}
     0, &  \forall d_i' \neq d_i \\
     1, & \forall d_i' = d_i
\end{cases}
\end{align}
where $\mathds{1}$ means the indicator function, the modified $\vx'_i = (d'_i,s'_i)$ and the original \(\vx_i = (d_i, s_i)\). Note that, the surface structures $s_i$ and $s'_i$ may be identical or different. In other words, the output of the model $f_\theta$ should only be altered by changes in the deep structure $d_i$, underscoring the model's reliance on deep rather than surface structure for generating responses.

\Eqref{eq:deep} quantifies an LLM's comprehension of deep structure by comparing outputs following changes to corresponding structures. This inspires a causal effect estimation perspective, where changes in outputs are viewed as different potential outcomes \cite{pearl2001direct,rubin2005causal}, resulting from interventions on either deep or surface structures.

\textbf{Causal Effect Estimation.} We proceed by defining LLMs' comprehension ability as a causal effect estimation problem. 
Define the treatment assignment variable $T$ on input $\vx_i$ as:
\begin{equation}
\label{eq:treatment}
    T = \begin{cases}
        0 & \text{intervention alters $s_i$, preserves  $d_i$}\\
        1 & \text{intervention alters both $s_i$ and $d_i$}
    \end{cases}
\end{equation}
Both $d_i$ and $s_i$ are unobservable, non-manipulable latent variables. Intervention $T$ only manipulate the observable input $\vx_i$. The potential outcome for $\vx_i$ under $T=t$ is $Y_i(t)$. The deep structure comprehension ability is defined as the causal effect of deep structure on an LLM's output, i.e., the expected change in the output when intervening on the deep structure while keeping surface structure fixed. Analogously, the surface structure comprehension capability is defined.

By defining LLMs' deep and surface structure comprehension as causal effects, we establish a causal estimation framework. Leveraging this framework, we quantify abstract comprehension capabilities via estimable causal effects, enabling objective assessment of LLMs' understanding.
\begin{figure}[t!]
\centering
\begin{minipage}[t]{0.48\textwidth}
\vspace{-30pt}
    \centering
    \includegraphics[width=0.7\textwidth]{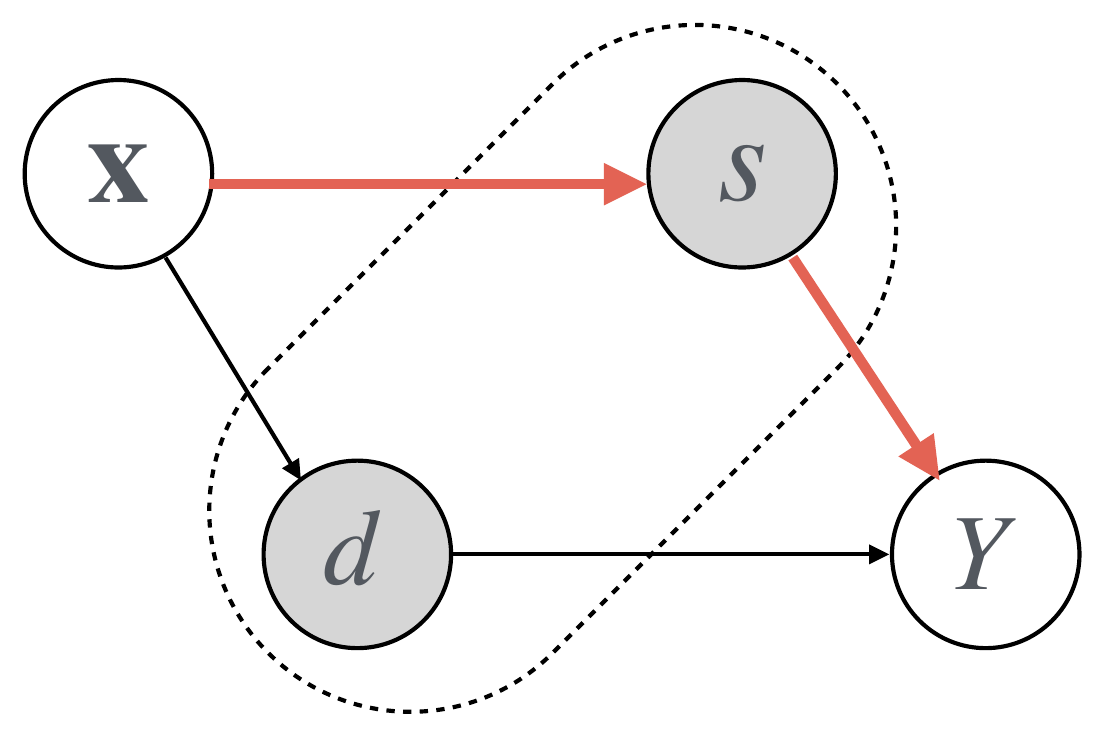}
    \parbox[t]{1\textwidth}{%
        \caption{Causal graph with mediation: $\vx \to d \to Y$  shows deep structures' direct causal effect, $\vx \to s \to Y$ indicates surface structures' indirect causal effect via mediator $s$.}%
        \label{fig:causal_mediation_graph}%
    }
\end{minipage}
\hfill
\begin{minipage}[t]{0.48\textwidth}
\vspace{-30pt}
    \centering
    \subfigure[Llama-3]{\includegraphics[width=0.49\linewidth]{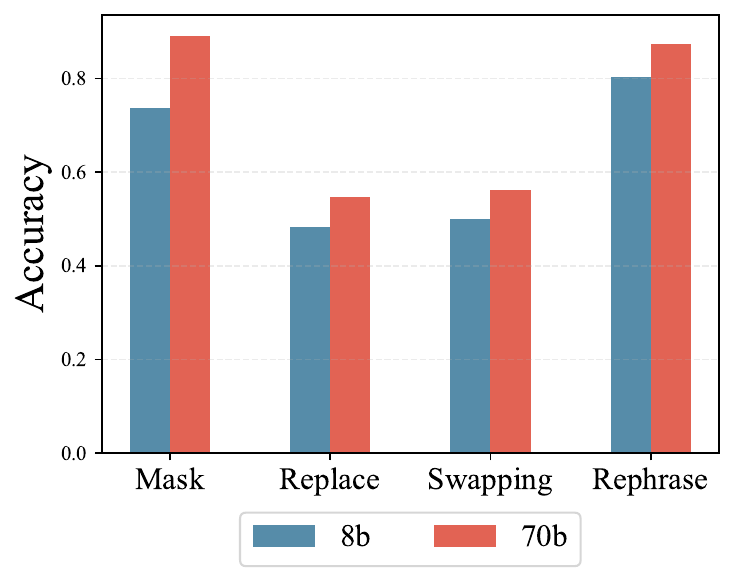}\label{fig:Intervention_Strategy_1}}
    \subfigure[Llama-2]{\includegraphics[width=0.49\linewidth]{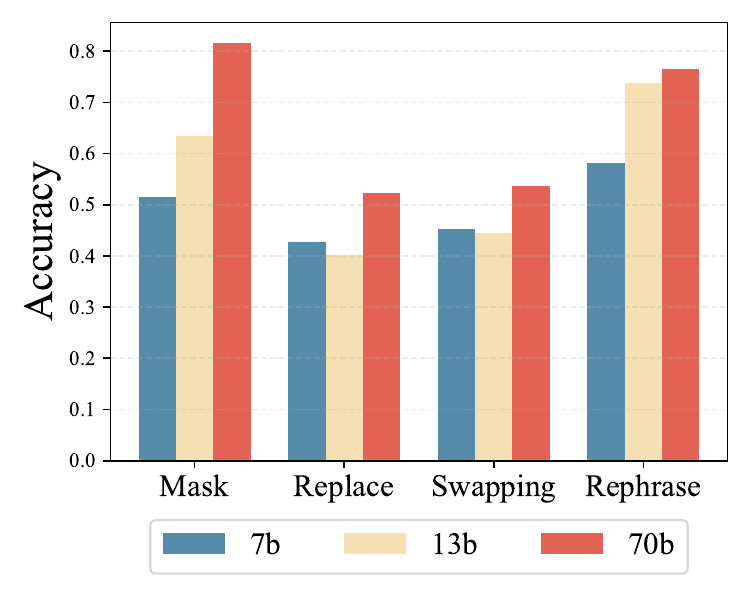}\label{fig:Intervention_Strategy_2}}
    \parbox[t]{1\textwidth}{%
        \caption{For the four intervention strategies, LLM accuracy drops from 100\% when surface structures are altered while deep structures remain unchanged in initially correct samples.}%
        \label{fig:intervention_strategy}%
    }
\end{minipage}
\vspace{-0.2in}
\end{figure}

\section{Method}
\label{section:method}
This section focuses on the causal effect of deep structure on output, as defined in \cref{section:Preliminaries}. Notably, estimating this causal effect inherently requires quantifying the causal effect of surface structure. Thus, by concentrating on deep structure, we also gain insights into the surface structure.
\Cref{section:Causal Estimand} presents a causal graph linking inputs, structures, and outcomes, formulating comprehension as direct (DCE) and indirect causal effects (ICE). \cref{section:Statistical Estimand and Causal Estimator} further addresses the non-estimability of DCE and ICE by proposing their approximations: ADCE and AICE. To estimate these metric in practice, \cref{section:The Generation of Intervention Data} details the generation of intervention data necessary for estimating ADCE and AICE. Finally, to demonstrate the value of our metric in LLMs evaluation, \cref{section: DCE-acc} shows how ADCE outperforms the common metric, accuracy, in evaluating LLMs' deep structure dependency.

\subsection{Formulating Deep Structure Comprehension as Direct Causal Effect}
\label{section:Causal Estimand}
Figure~\ref{fig:causal_mediation_graph} presents a causal graph with mediation depicting relationships among inputs $\vx$, deep structure $d$, surface structure $s$, and outcome $Y$. It illustrates how $\vx$ influences $Y$ via $d$ ($\vx \to d \to Y$) and $s$ ($\vx \to s \to Y$).Deep structure, reflecting core semantics, logically correlates with output, justifying the path $\vx \to d \to Y$. Surface structure's impact on output is considered for the following reasons: Existing studies show surface structure changes affect LLMs outcomes even with constant deep structure \cite{stolfo2022causal,hooda2024large,jiang2024peek,guo2024learning}. Our two-digit multiplication experiment in Figure~\ref{fig:intervention_strategy} confirms this, showing performance decline on corrected answered samples when modifying only surface structure. 

Figure~\ref{fig:causal_mediation_graph} illustrates a causal mediation analysis, focusing on the direct causal effect (DCE) of deep structure $d$ on output $Y$ via the path $\vx \to d \to Y$. The required assumptions for causal mediation analysis --- \emph{positivity}, \emph{consistency}, and \emph{sequential ignorability} \cite{rubin1974estimating, vanderweele2009conceptual, cole2009consistency, coffman2021tutorial, nguyen2022clarifying} --- are satisfied, as detailed in \cref{app:The Causal Mediation Analysis Assumptions}. This analytical setup allows us to rigorously examine the influence of deep structure on model outputs, isolating it from the effects of surface structure.

As directly estimating DCE is intractable due to challenges in altering deep structure while maintaining surface structure, an indirect estimation method has been developed \cite{pearl2001direct,imai2010general,imai2010identification,vanderweele2013three,richiardi2013mediation}:
\begin{align}
\label{eq:DCE-causal estimand}
\underbrace{\delta_{\mathrm{DCE}}}_{\text{DCE}} = \underbrace{\mathbb{E}_{\vx_i}[Y_i(T=1, s(T=1))-Y^{\mathrm{origin}}_i]}_{\mathrm{TE}} - \underbrace{\mathbb{E}_{\vx_i}[Y_i(T=0, s(T=1))-Y^{\mathrm{origin}}_i]}_{\mathrm{ICE}}
\end{align}
where $s(T=t)$ is the mediator value at $T=t$. For $\mathbf{x}_i$, $Y_i(T=1, s(T=1))$, $Y_i(T=0, s(T=1))$, and $Y^{\mathrm{origin}}_i$ represent outcomes with both structures altered, only surface changed, and unintervened original structures, respectively. 
\Eqref{eq:DCE-causal estimand} specifically emphasizes the effect of deep structure on the output while maintaining the surface structure constant at $s(T=1)$. ICE in \Eqref{eq:DCE-causal estimand} via $\mathbf{x} \to s \to Y$ quantifies the causal effect of surface structure on $Y$. ICE and DCE comprise the total effect (TE) of $\mathbf{x}$ on $Y$. \cref{app:DCE-ICE-TE} provide more details on DCE, ICE, and TE.

\subsection{Estimating DCE from Data: Challenges and Solutions} 
\label{section:Statistical Estimand and Causal Estimator}
 Although \Eqref{eq:DCE-causal estimand} can indirectly esitimate DCE, it still suffers the following issues:
\begin{itemize}[leftmargin=1em,nosep]
\setlength\itemsep{0em}
    \item Unobservability: ICE in \Eqref{eq:DCE-causal estimand} is unobservable due to a paradox: The surface structure in ICE must maintain the value it would have under deep structure change ($s(T=1)$), while the deep structure in ICE should remain unchanged ($T=0$). Consider 2-digit multiplication task in \cref{tab:question_comparison}, ICE should preserve the surface query format as \textit{What is $<$mask$>$ times 20?} ($s(T=1)$) where the deep structure is altered ($T=1$), thereby contravening the condition $T=0$.      
    \item Incomputability: \Eqref{eq:DCE-causal estimand} requires differencing $Y_i$ and $Y^{\text{origin}}_i$, but the outputs of LLMs typically lack numerical form, complicating the execution of such subtraction. For instance, in word unscrambling tasks \cite{srivastava2023beyond}, the string nature of outputs inherently prevents direct arithmetic operations such as subtraction.
\end{itemize}  
To address above issues in DCE, we propose the following solutions. Based on these solutions, we derive the approximated direct causal effect (ADCE) as an estimable surrogate for DCE.
\begin{table}[t!]
    \centering
    \setlength{\abovecaptionskip}{0pt}  
    \setlength{\belowcaptionskip}{2pt}  
    \caption{Examples of different \underline{intervention strategies} on mathematics and common sense tasks. More illustrations on multiple tasks are included in \cref{app:Details on Intervention}.}
    \label{tab:intervention_examples}
    \vspace{0pt}  
    \setlength{\extrarowheight}{2pt}  
    \scalebox{0.9}{
        \begin{tabular}{llm{0.5\linewidth}}
            \toprule
            \multicolumn{1}{l}{Dataset} & \multicolumn{1}{l}{Term} & Origin \& Intervention   Data                                                                                                               \\ \midrule
        \multirow{3}{*}{\parbox[t]{1.2in}{2-digit Multiplication\\[0.3em] (\underline{Mask})}}   & Origin                    & \texttt{\fontsize{8}{10}\selectfont What is 50 times 20? A: 1000}                                                                               \\
             \cline{2-3} 
            &TE with $T=1,s(T=1)$             & \texttt{\fontsize{8}{10}\selectfont What is <Mask> times 20? A: None}                                       \\
             \cline{2-3} 
            &AICE with $T=0,s(T=0)$           &\texttt{\fontsize{8}{10}\selectfont What <Mask> 50 times 20? A: 1000}                                      \\ \midrule
            \multirow{6}{*}{\parbox[t]{1.2in}{CommonsenseQA\\[0.3em] (\underline{Rephrase})}}   
            & Origin                    & \texttt{\fontsize{8}{10}\selectfont Reading newspaper one of many ways to practice your what? A: literacy}\\
             \cline{2-3} 
            &TE with $T=1,s(T=1)$           & \texttt{\fontsize{8}{10}\selectfont Using newspapers to wrap gifts is one way to practice your what? A: money}                   \\
             \cline{2-3} 
            &AICE with $T=0,s(T=0)$              & \texttt{\fontsize{8}{10}\selectfont Using newspapers to read articles is one way to practice your what? A: literacy}                      \\ 
            \bottomrule
        \end{tabular}
    }
\vspace{-10pt} 
\end{table}

\textbf{Addressing Unobservability.} ICE in \Eqref{eq:DCE-causal estimand} requires simultaneous $T=0$ and $s(T=1)$, which are unobservable in practice. Therefore, we propose approximated DCE (ADCE) to substitute original ICE in \Eqref{eq:DCE-causal estimand} with observable $(T=0,s(T=0))$as approximated ICE (AICE). {The efficacy of this approximation hinges on the similarity between the original ICE and AICE, specifically the similarity between $(T=0,s(T=1))$ and $(T=0,s(T=0))$. To ensure approximation validity, we meticulously design intervention strategies for generating data that minimize the discrepancy between the original ICE and AICE. Detailed intervention strategies are discussed in \cref{section:The Generation of Intervention Data}. The AICE and corresponding approximated DCE (ADCE) can be represented as:}
\begin{equation}
\label{eq:dce-1}
\underbrace{\delta_{\mathrm{ADCE}}}_{\text{approximated DCE  (ADCE)}} = \underbrace{\mathbb{E}_{\vx_i}[Y_i(T=1, s(T=1))-Y^{\mathrm{origin}}_i]}_{\mathrm{TE}} - \underbrace{\mathbb{E}_{\vx_i}[Y_i(T=0, s(T=0))-Y^{\mathrm{origin}}_i]}_{\text{approximated ICE (AICE)}}
\end{equation}
Observable AICE in \Eqref{eq:dce-1} quantifies surface structure's causal effect, i.e., LLMs' surface structure comprehension ability while controlling deep structure. {Strategies in Section \ref{section:The Generation of Intervention Data}, like minimally modifying TE with $(T = 1, s(T = 1)) $ to AICE with $(T = 0, s(T = 0))$, ensure Equation \ref{eq:dce-1} maximizes surface similarity between TE and AICE, isolating deep structure impacts in ADCE.}

\textbf{Addressing Incomputability:} To address incomputability, following \cite{stolfo2022causal,chen2024quantifying}, we introduce indicator function $\mathds{1}$ instead of numerical differencing. Indicator function operations can capture output changes relative to the original output, making ADCE estimation applicable across diverse model outputs. We then redefine
\begin{equation}
\label{eq:black-DCE}
\underbrace{\hat{\delta}_{\mathrm{ADCE}}}_{\text{approximated DCE (ADCE)}} = \underbrace{\mathbb{E}_{\vx_i}\Big[\mathds{1}_{Y_i(T=1, s(T=1)) \neq Y^{\text{origin}}_i}\Big]}_{\mathrm{TE}} - \underbrace{\mathbb{E}_{\vx_i}\Big[\mathds{1}_{Y_i(T=0, s(T=0)) \neq Y^{\text{origin}}_i}\Big]}_{\text{approximated ICE (AICE)}}
\end{equation}
Moreover, as detailed in \cref{section:Preliminaries}, LLMs solely utilizing deep structure for answering satisfy:
\begin{align}
\label{eq:inter}
    Y_i(T=1, s(T=1)) \neq Y^{\mathrm{origin}}_i \quad \text{and} \quad Y_i(T=0, s(T=0)) = Y^{\mathrm{origin}}_i.
\end{align}
Combining \Eqref{eq:black-DCE} and \Eqref{eq:inter} yields $\hat{\delta}_{\mathrm{ADCE}} \in  [-1, 1]$, where larger values indicate stronger causal effects of deep structure on model output. It means higher $\hat{\delta}_{\mathrm{ADCE}}$ suggests greater dependence of LLMs' outputs on deep structure, implying enhanced deep structure comprehension. Thus, $\hat{\delta}_{\mathrm{ADCE}}$ is interpretable and enables comparisons across both tasks and models.


\subsection{Generating Intervention Data for Approximated DCE Estimation}
\label{section:The Generation of Intervention Data}
To indirectly estimate ADCE, we should detail the generation of intervention data required for TE and AICE estimation in \Eqref{eq:black-DCE}. Specifically, we focus on constructing appropriate approximation to minimize the discrepancy between AICE in \Eqref{eq:black-DCE} and oracle ICE in \Eqref{eq:DCE-causal estimand}.

\textbf{Intervention Data for TE.} TE requires intervention data with altered deep structure ($T=1$) and matched surface structure ($s(T=1)$). To achieve this, we intervene on inputs $\vx$ to alter core semantics using \emph{Mask} and \emph{Rephrase} strategies in \cref{tab:question_comparison}. For inputs with explicit core semantic words, such as numbers and operators in two-digit multiplication tasks, we apply \emph{Mask}; otherwise, we use \emph{Rephrase} . \cref{tab:intervention_examples} shows examples with diverse intervention strategies for TE.

\textbf{Intervention Data for AICE.} To approximate the unobservable ICE in \Eqref{eq:DCE-causal estimand}, we minimally modify the deep structure of TE with $(T=1, s(T=1))$ in \Eqref{eq:black-DCE} to derive AICE with $(T=0, s(T=0))$. Deriving AICE from TE yields an observable substitute for the original ICE and ensures high similarity between $s(T=1)$ in TE and $s(T=0)$ in AICE. Thus, the key distinction between TE and AICE lies in the deep structure difference, ensuring isolation of surface structure's effect on output. Specially, we employ two strategies: (1) \emph{Mask}: masking $k$ non-core semantic words closest to the masked core semantic word in TE; (2) \emph{Rephrase}: minimizing word-level modifications to transform TE with $(T=1, s(T=1))$ to AICE with $(T=0, s(T=0))$ with prompts suck as \textit{modify the keywords with minimal word changes }. \cref{tab:intervention_examples} provides detailed intervention examples.

For rephrasing, we use Claude-3.5-Sonnet \cite{anthropic2024claude} and design a self-checking mechanism. Claude re-answers rephrased questions to verify deep structure alteration and preservation. Algorithm \ref{alg:InterventionDataGen} outlines the process, with detailed mask rules and rephrase prompts in \cref{app:Details on Intervention}.


\subsection{ADCE: Bidirectional Evaluation of Deep Structure Comprehension}
\label{section: DCE-acc}
This section compares the proposed ADCE in \eqref{eq:black-DCE} with accuracy metrics. Our analysis demonstrates that ADCE better reflects the bidirectional relationship between deep structure and model outputs, regardless of whether the outputs are depended on the deep structure or merely associated with surface structure due to spurious correlations.

\textbf{LLMs' Output Depends on Deep Structure.} When outputs of LLMs mainly rely on deep structure, accuracy measures the correctness linking deep structure to output. In contrast, ADCE assesses the bidirectional relationship between deep structure to outputs, offering a more comprehensive evaluation. Specifically, we demonstrate that ADCE integrates the \emph{probability of sufficiency} (PS) and \emph{probability of necessity} (PN) \cite{pearl2000modelsreasoning}. For two boolean $X \in \{0,1\}$ and $Y \in \{0,1\}$, PS ($\delta_{\mathrm{PS}}$) and PN ($\delta_{\mathrm{PN}}$) measure how likely $X=1$ causes $Y=1$ given $X=0,Y=0$, and how likely $X=0$ prevented $Y=1$ given $X=1,Y=1$, respectively. In other words, PS assesses if $X=1$ is sufficient to cause $Y=1$, establishing a sufficient condition $X \Rightarrow Y$, while PN evaluates if $X=1$ is necessary for $Y=1$ to occur, determining a necessary condition $Y \Rightarrow X$. Theorem \ref{theorem:ps-pn} demonstrates ADCE is a weighted combination of PS and PN, thereby capturing the bidirectional relationship between the sufficiency and necessity of deep structure changes on output variations.
\begin{theorem}
\label{theorem:ps-pn}
(ADCE as a Combination of PN and PS) 
Let $T$ be the treatment variable in \Eqref{eq:treatment} and $\hat{Y}$ the outcome of the indicator function in \Eqref{eq:black-DCE}. Assume $\hat{Y}$ is monotonic with respect to $T$, for ADCE, it holds that:
\begin{align}
    \delta_{\mathrm{ADCE}} &= \frac{\alpha}{2}\cdot\delta_{\mathrm{PS}} + \frac{\beta}{2}\cdot\delta_\mathrm{PN}
\end{align}
where $\alpha:=\mathbb{P}(\hat{Y}=1|T=1,s(T=1))$, $\beta:=\mathbb{P}(\hat{Y}=0|T=0,s(T=0))$.
\end{theorem}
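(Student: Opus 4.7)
My plan is to reduce $\delta_{\mathrm{ADCE}}$ to a single joint counterfactual probability and then recognise that probability as the numerator shared by PN and PS, after which the claimed decomposition reduces to averaging two equal expressions. First I would introduce the shorthand $\hat{Y}(t) := \mathds{1}_{Y_i(T=t,\,s(T=t))\neq Y_i^{\mathrm{origin}}}$, so that \Eqref{eq:black-DCE} rewrites as $\delta_{\mathrm{ADCE}} = \mathbb{P}(\hat{Y}(1)=1) - \mathbb{P}(\hat{Y}(0)=1)$, and observe that with this notation $\alpha = \mathbb{P}(\hat{Y}(1)=1)$ and $\beta = \mathbb{P}(\hat{Y}(0)=0)$ (the conditioning on the treatment arm in the statement becomes identification of the potential outcome under consistency).

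Next I would unpack monotonicity in its natural potential-outcomes form, $\hat{Y}(0) \le \hat{Y}(1)$ almost surely: if a surface-only perturbation already flips the model's answer, then a perturbation that additionally changes the deep structure must flip it as well. This gives the inclusion $\{\hat{Y}(0)=1\} \subseteq \{\hat{Y}(1)=1\}$ and, by a one-line inclusion--exclusion, the key identity $\delta_{\mathrm{ADCE}} = \mathbb{P}(\hat{Y}(1)=1,\,\hat{Y}(0)=0)$, i.e.\ ADCE is exactly the mass on the counterfactual ``deep-flip, no surface-flip'' stratum. With sequential ignorability and consistency in force (\cref{app:The Causal Mediation Analysis Assumptions}), the paper's definitions of PN and PS specialise to $\delta_{\mathrm{PN}} = \mathbb{P}(\hat{Y}(0)=0 \mid \hat{Y}(1)=1)$ and $\delta_{\mathrm{PS}} = \mathbb{P}(\hat{Y}(1)=1 \mid \hat{Y}(0)=0)$; writing each as a ratio of the joint to a marginal and multiplying through yields $\delta_{\mathrm{ADCE}} = \alpha\,\delta_{\mathrm{PN}} = \beta\,\delta_{\mathrm{PS}}$, and averaging the two equal expressions produces the stated weighted combination (modulo the PS/PN labelling convention used in the theorem).

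The main obstacle is the joint-identification step: the identity $\delta_{\mathrm{ADCE}} = \mathbb{P}(\hat{Y}(1)=1,\hat{Y}(0)=0)$ is a statement about the \emph{joint} counterfactual law of $(\hat{Y}(0),\hat{Y}(1))$, whereas the empirical estimator in \Eqref{eq:black-DCE} touches only the marginals $\mathbb{P}(\hat{Y}(t)=\cdot)$ because any given sample is subjected to only one treatment arm. Monotonicity is precisely the bridge, as pinning the ``$\hat{Y}(0)=1,\hat{Y}(1)=0$'' stratum to zero makes the joint recoverable from the marginals. I would therefore spend most of the write-up (i) stating monotonicity carefully in potential-outcomes language and motivating it in the LLM context, and (ii) invoking the standard identification lemma for the complier stratum under monotonicity plus ignorability; the algebra that follows is essentially a single averaging line.
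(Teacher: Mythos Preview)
Your argument is correct and in fact tighter than the paper's. The paper does not go through the complier stratum $\mathbb{P}(\hat{Y}(1)=1,\hat{Y}(0)=0)$; instead it plugs the Tian--Pearl closed forms
\[
\delta_{\mathrm{PN}}=\frac{\mathbb{P}(\hat{Y}=0)-\mathbb{P}(\hat{Y}=0\mid T=1)}{\mathbb{P}(T=0,\hat{Y}=0)},\qquad
\delta_{\mathrm{PS}}=\frac{\mathbb{P}(\hat{Y}=0\mid T=0)-\mathbb{P}(\hat{Y}=0)}{\mathbb{P}(T=1,\hat{Y}=1)}
\]
(valid under monotonicity) into $\delta_{\mathrm{ADCE}}=\mathbb{P}(\hat{Y}=0\mid T=0)-\mathbb{P}(\hat{Y}=0\mid T=1)$, telescopes through the marginal $\mathbb{P}(\hat{Y}=0)$, and only then invokes the additional experimental-design assumption $\mathbb{P}(T=0)=\mathbb{P}(T=1)=\tfrac12$ to turn the joint weights $\mathbb{P}(T=t,\hat{Y}=y)$ into $\tfrac12\alpha$ and $\tfrac12\beta$. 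Your route sidesteps that assumption entirely: by working with potential outcomes you obtain $\delta_{\mathrm{ADCE}}=\alpha\,\delta_{\mathrm{PN}}=\beta\,\delta_{\mathrm{PS}}$ directly, and the factor $\tfrac12$ arises purely from averaging two equal expressions rather than from a balanced-design hypothesis. That is a genuine simplification.

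On the labelling: you are right that there is a swap, and it is not your error. The paper's appendix defines PN and PS relative to the event $(T=0,\hat{Y}=0)$, so its $\delta_{\mathrm{PN}}$ is $\mathbb{P}(\hat{Y}(1)=1\mid\hat{Y}(0)=0)$ and its $\delta_{\mathrm{PS}}$ is $\mathbb{P}(\hat{Y}(0)=0\mid\hat{Y}(1)=1)$---exactly the reverse of the $(T=1,\hat{Y}=1)$ convention stated in the main text and used in your proposal. Once you align conventions, your $\tfrac{\alpha}{2}\delta_{\mathrm{PN}}+\tfrac{\beta}{2}\delta_{\mathrm{PS}}$ and the theorem's $\tfrac{\alpha}{2}\delta_{\mathrm{PS}}+\tfrac{\beta}{2}\delta_{\mathrm{PN}}$ coincide.
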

\cref{theorem:ps-pn} demonstrates that ADCE quantifies the probability that modifications in deep structure are both necessary and sufficient for output variations. That is, ADCE measures the likelihood that deep structure alterations are the sole pathway leading observed changes in output. More introductions on PS and PN, along with detailed proof of \cref{theorem:ps-pn} are in Appendix \ref{app:The Proof Detail}.

\textbf{LLMs' Output Depends on Surface Structure.} When models' outputs mainly depend on surface structure, e.g., spurious correlations, conventional accuracy metrics can be misleading \cite{ribeiro2016should,beery2018recognition,hashimoto2018fairness,duchi2019distributionally}. For example, in sentiment classification tasks \cite{borkan2019nuanced,koh2021wilds}, spurious correlations between identity and toxicity can lead models to misclassify texts containing identity information as toxic. While accuracy metrics based on these surface structure (e.g., identity information) might suggest high performance, they tend to overestimate the actual efficacy of the model. ADCE mitigates this by considering both sufficiency (identity information leading to toxicity) and necessity (toxicity not always implying identity information). This approach mitigates overreliance on spurious high-correlation paths from identity to toxicity, thus preventing performance overestimation. In \cref{section:Spurious Correlation}, we empirically demonstrate that as the level of spurious correlation increases, accuracy remains misleadingly high, whereas ADCE declines. This demonstrates ADCE's superior ability to reflect a model's reliance on deep structure, particularly in scenarios dominated by spurious correlations.
\section{Experiments}
\label{section:Experiments}

In this section, we experimentally explore three critical questions: (1) \textbf{Deep structure comprehension in LLMs}: Do LLMs process questions through an understanding of the deep structure of problems? We analyze this using the proposed ADCE in \cref{section:LLMs' Deep Structure Understanding}. (2) \textbf{Prerequisite of deep structure comprehension}: What prerequisite enables LLMs to utilize deep structure in their responses? Insights into this question are discussed in \cref{section:Origin of Deep Structure Understanding}? (3) \textbf{Comparative influence of deep and surface structures}: Which has a stronger causal effect on the outputs of LLMs -- deep or surface structures? These investigations detailed in \cref{section: Deep vs. Surface} collectively address the queries raised in \cref{section:intro}, assessing whether LLMs are deep thinkers or merely surface structure learners. To further support \cref{section: DCE-acc}, we evaluate whether ADCE assesses core semantic understanding more reliably than accuracy under spurious correlations (in \cref{section:Spurious Correlation}).

Additionally, Appendix \ref{sec:Experiments on Synthetic Data} presents a quantitative experiment on synthetic data where the causal effects of true deep structures on outputs are computable to show the accuracy of ADCE and AICE. Appendix \ref{app:Experiments on Noisy Data} contains experiments supporting that ADCE and AICE effectively reflect the different structural understanding capabilities of LLMs, even when the data or labels contain noise.
\subsection{Setup}
\label{section:Experiment Setup}
\textbf{Dataset Evaluation and Intervention.} We employ five popular benchmarks across mathematics, logic, and commonsense knowledge. For mathematics, we consider 2-Digit Multiplication task \cite{srivastava2023beyond} and GSM8k \cite{cobbe2021gsm8k} for multi-step mathematical problems. Logical reasoning tasks include Word Unscrambling \cite{srivastava2023beyond}, which requires unscrambling given letters to form an English word for implicit reasoning, and the binary Analytic Entailment task \cite{srivastava2023beyond} for linguistic entailment. Commonsense knowledge includes CommonsenseQA \cite{talmor2018CommonsenseQA}, a multiple-choice task covering daily life knowledge.

Considering the diversity of experimental data, we explore various intervention strategies. Specifically, we use the \emph{Mask} strategy for 2-Digit Multiplication, GSM8k and Word Unscrambling, which have key words representing core semantics. For Analytic Entailment and CommonsenseQA, with diverse presentation formats and less evident deep structure, we apply the \emph{Rephrase} strategy. Appendix \ref{app:Details on Intervention} includes intervention examples and sample sizes of evaluated datasets after intervention. 

\textbf{Models and Baselines.} We test 12 leading models from four LLM families: 
Llama (Llama-2-7b, Llama-2-13b, Llama-2-70b, Llama-3-8b, Llama-3-70b) \cite{touvron2023llama2,dubey2024llama},
Mistral (Mistral-7b, Mixtral-8x7b, Mixtral-8x22b) \cite{jiang2023mistral,jiang2024mixtral},
GPT (GPT-3.5-Turbo, GPT-4o) \cite{achiam2023gpt},
and Claude (Claude-3-Sonnet, Claude-3.5-Sonnet) \cite{anthropic2024claude}. Among them, Llama and Mistral families are open-source, while GPT and Claude are closed-source with inaccessible weights and architectures. A randomly weighted Llama-3-70b serves as a baseline denoting no direct causal effect between deep structure and outputs. Comparing its ADCE with other models evaluates our estimation method's effectiveness.
\begin{figure}[t!]
  \centering
  \includegraphics[width=1\textwidth, height=0.3\textheight]{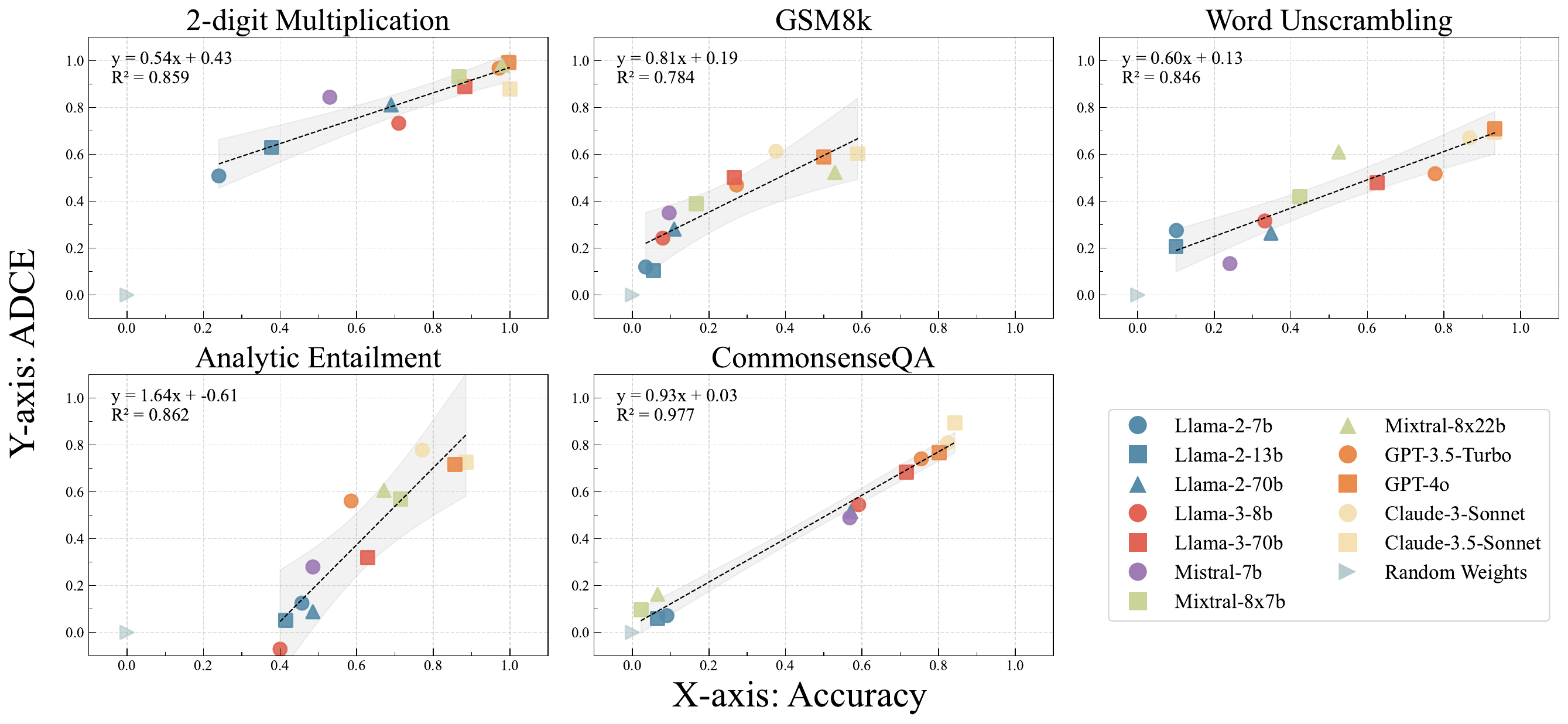}
\vspace{-0.3in}
  \caption{Deep structure understanding in LLMs via ADCE. Positive ADCE demonstrate the existence of direct causal effect of deep structure on outcomes, increasing with model scale and accuracy. Accuracy-DCE slopes vary across tasks, with steeper slopes indicating higher task complexity and greater reliance on various deep structure comprehension ability.}
\vspace{-0.2in}
  \label{fig: DCE}
\end{figure}
\subsection{Deep Structure Comprehension Capability of LLMs}
\label{section:LLMs' Deep Structure Understanding}
Figure \ref{fig: DCE} illustrates the relationship between accuracy and ADCE for 12 LLMs across five tasks. Notably, the ADCE for most models consistently remains positive, in stark contrast to the zero ADCE observed in the random weight baseline\footnote{Both Accuracy and ADCE of the random weight baseline are zero, indicating that this model neither comprehends problems nor makes random guesses. Outputs from the baseline are shown in \cref{app:Random Weighted Baseline}.}. Positive ADCE values suggest that intervening deep structure causes LLMs to deviate from correct answers on previously solved problems, highlighting the models' reliance on deep structure for accurate problem-solving. This finding underscores that most LLMs possess deep structure understanding ability beyond surface structure.

Furthermore, comparing models within the same series (e.g., Llama-2, Llama-3, Mixtral), we observe that both accuracy and ADCE increase with model scale. A strong linear correlation emerges between accuracy and ADCE, with high $R^2 > 0.7$ indicating a good fit to the linear model. This suggests that models with higher accuracy exhibit greater dependence on deep structure for outputs.

Finally, slope $\beta$ of the accuracy-ADCE regression in Figure \ref{fig: DCE} quantifies the increase in deep structure understanding required per unit accuracy increase. Tasks like two-digit multiplication and word unscrambling show smaller $\beta$, indicating less deep structure comprehension needed for accuracy gains. GSM8k, Analytic Entailment and CommonsenseQA have higher $\beta$, emphasizing deep structure importance for accuracy. Variations in $\beta$ across tasks reflects underlying task complexity. Low-$\beta$ tasks (e.g., 2-Digit Multiplication, Word Unscrambling) have fixed formats and single-skill requirements, needing small deep structure understanding for improvement. High-$\beta$ tasks (e.g., GSM8k, Analytic Entailment, CommonsenseQA) involve multi-step reasoning, diverse logical relationships and broad knowledge, demanding varied deep structure comprehension for accuracy gains.

\subsection{The Prerequisite of Deep Structure Comprehension Capability}
\label{section:Origin of Deep Structure Understanding}
In Figure \ref{fig: DCE}, certain LLMs, such as Llama-3-8b on Analytic Entailment, show minimal causal effects of deep structures on model output characterized by negative ADCE. This anomaly, where twisting deep structure improves accuracy, 
prompts an investigation into the specific conditions under which LLMs fail to comprehend deep structure across different tasks.

\begin{wrapfigure}{r}{0.4\textwidth}
\begin{center}
    \includegraphics[width=0.35\textwidth]{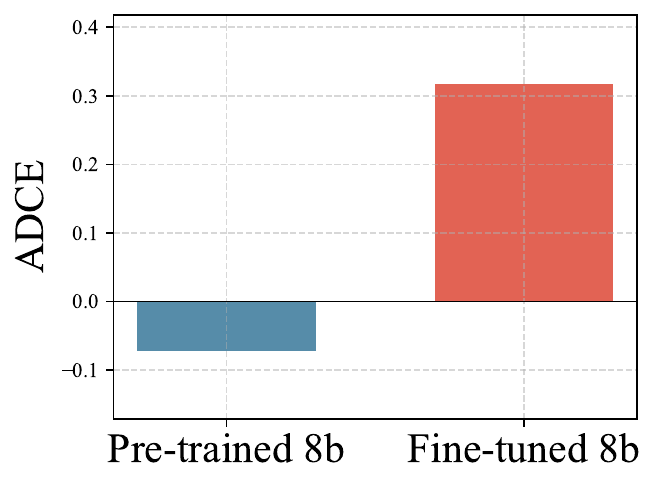}
\end{center}
\vspace{-0.1in}
\caption{ADCE pre- and post- SFT. SFT activates entailment knowledge, enabling the model to exhibit deep structure causal effects on outcomes, as captured by proposed ADCE.} 
\label{fig:sft_DCE} 
\end{wrapfigure}
To investigate LLMs' failure, we explore the potential prerequisites for deep structure comprehension with positive ADCE. Inspired by previous work \cite{zevcevic2023causal,jin2023can}, which proposes that the causality exhibited in LLMs often mirrors task-relevant knowledge embeded in their training data, we hypothesize that the absence of deep structure comprehension might indicate either unactivated or absent relevant knowledge in the training data. This theory proposes that missing replicable facts could hinder deep structure comprehension. To test this hypothesis, we employ supervised fine-tuning (SFT) to potentially activate task-specific knowledge \cite{gekhman2024does,allen2023physics,zhou2024lima}\footnote{Given the diversity of LLMs' training data \cite{dubey2024llama}, we lean towards the view that relevant knowledge is not activated rather than absent from the training data.}. Specifically, we fine-tune Llama-3-8b on Analytic Entailment and compare its ADCE before and after SFT. Figure \ref{fig:sft_DCE} clearly illustrates an improvement in ADCE pre- and post-SFT, supporting that the ability to comprehend deep structures may rely on activating task relevant facts within the training data. Our findings also suggest that ADCE is effective for detecting such changes in comprehension pre- and post-activation. Further experiment details on fine-tuning process and various post-training strategies are provided in \cref{app:details on sft}.

\subsection{Deep vs. Surface: A Comparison of LLMs' Comprehension ability}
\label{section: Deep vs. Surface}
After analyzing LLMs' deep structure comprehension and its potential sources, we extend our investigation to assess the reliance of LLMs on deep v.s. surface structures. This comparison aims to determine whether LLMs are deep thinkers or merely surface structure learners. We utilize ADCE in \Eqref{eq:black-DCE} to measure the direct causal effect of deep structure, and an AICE, also specified in \Eqref{eq:black-DCE}, to quantify the indirect causal effect of surface structure while keeping deep structure constant. Figure \ref{fig:Deep vs. Surface} shows these comparisons, presenting ADCE as $\delta_{\mathrm{ADCE}}$ and AICE as $\delta_{\mathrm{AICE}}$. 
Our analysis reveals that closed-source models (e.g., GPT, Claude) primarily rely on deep structure, while open-source models (e.g., Llama) are more sensitive to surface structure. However, this sensitivity gradually decreases as model size increases, suggesting larger LLMs is more dependent on deep structure for answering. This analysis indicates that the tested closed-source models are not surface structure learners, as their responses rely more on deep structure. For the evaluated open-source LLMs, the dependency on surface structure tends to diminish as model scale increases.
\begin{figure}[t!]
  \centering  \includegraphics[width=1\textwidth, height=0.15\textheight]{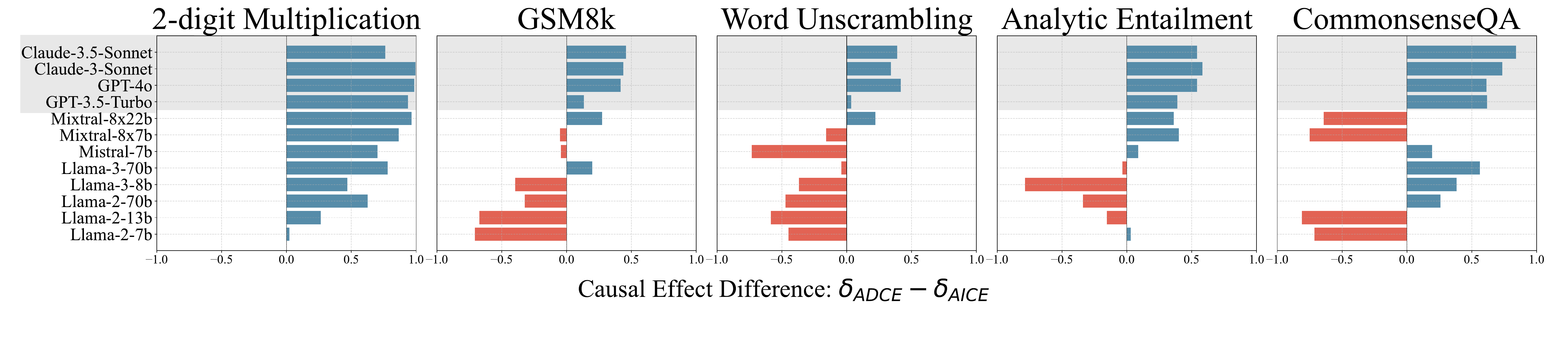}
\vspace{-0.5in}
  \caption{Comparing deep vs. surface structure. $\delta_{\mathrm{ADCE}}$ represents ADCE of deep structure on output, while $\delta_{\mathrm{AICE}}$ denotes AICE of surface structure on output. {\setlength{\fboxsep}{1pt}\colorbox{lightgray}{Closed-source models}} exhibit a greater reliance on deep structure for outputs. Open-source models (e.g. LLama-2) are more sensitive to surface structure; however, as model scale increases, this sensitivity is mitigated.}
  \label{fig:Deep vs. Surface}
\vspace{-0.2in}
\end{figure}
\begin{figure}[t!]
\centering
    \hfill
    \subfigure[Majority, 8b]{\label{fig:ldm_persoanlization_fid-fid_in1k}\includegraphics[width=0.24\linewidth]{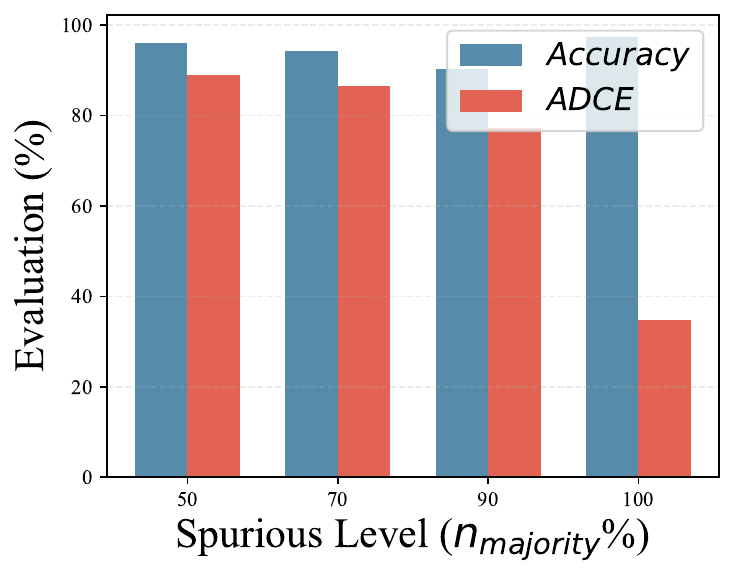}}
    \hfill
    \subfigure[Majority, 70b]{\label{fig:ldm_persoanlization_fid-is_in1k}\includegraphics[width=0.24\linewidth]{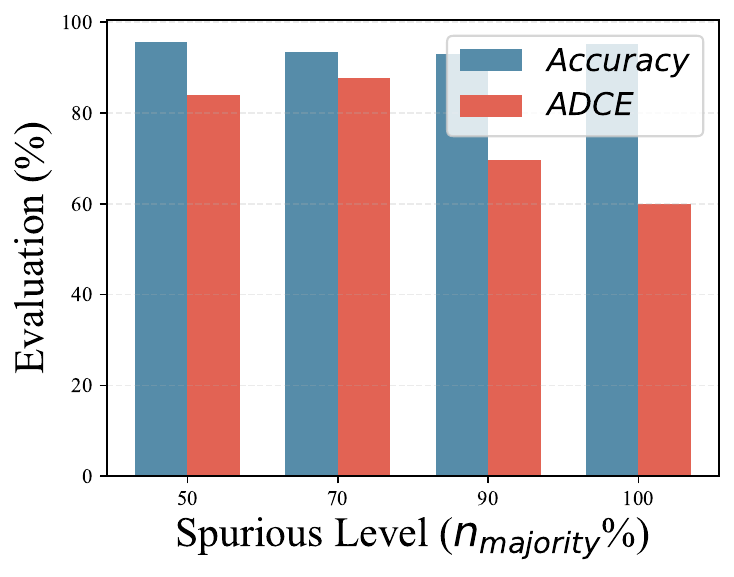}}
    \hfill
    \subfigure[Minority, 8b]{\label{fig:ldm_persoanlization_fid-fid_cc3m}\includegraphics[width=0.24\linewidth]{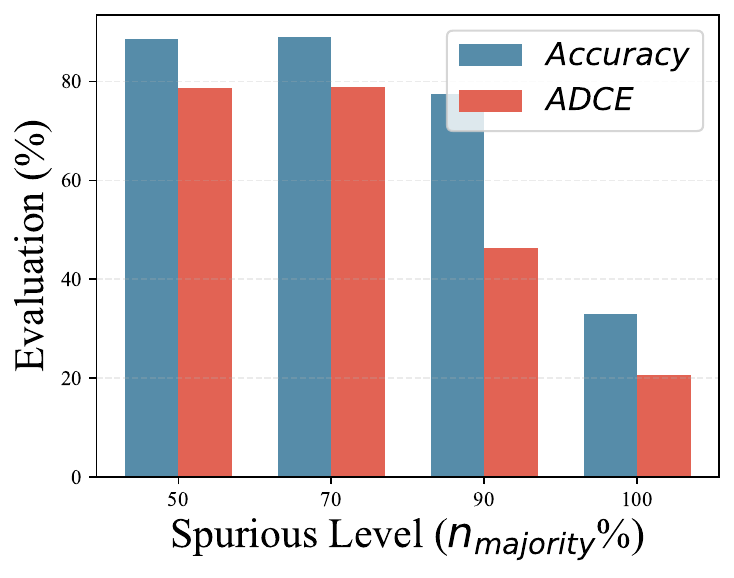}}
    \hfill
    \subfigure[Minority, 70b]{\label{fig:ldm_persoanlization_fid-is_cc3m}\includegraphics[width=0.24\linewidth]{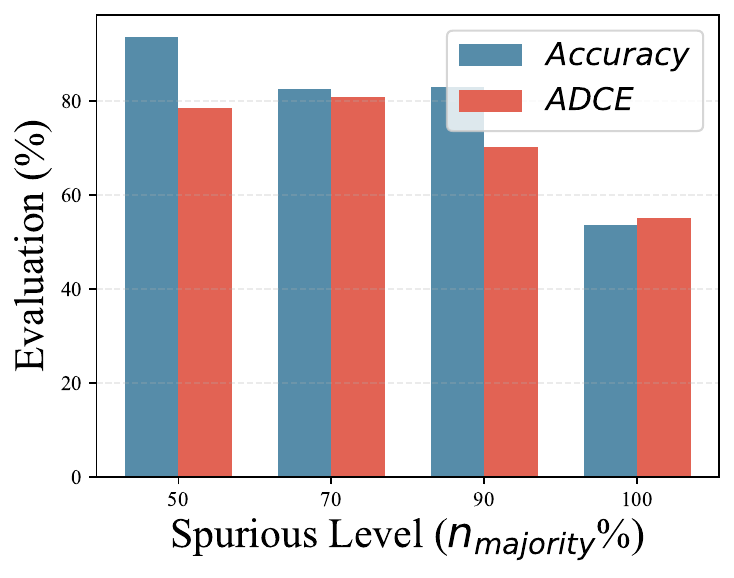}}
    \hfill
\vspace{-0.15in}
\caption{Spurious correlation results in LLama-3. In majority groups with spurious correlations, increasing correlation levels lead to high accuracy but declining ADCE. In minority groups without spurious correlations, accuracy and ADCE trends align. ADCE better reflects the model's reliance on spurious attributes over core semantics in spurious conditions, compared to accuracy.}
\vspace{-0.2in}
\label{fig:spurious}
\end{figure}

\subsection{ADCE vs. Accuracy: Case Study on Spurious Correlation}
\label{section:Spurious Correlation}
This section highlights the superiority of ADCE over traditional accuracy in measuring model reliance on deep structure, particularly in scenarios involving spurious correlations. 
Leveraging CivilComments \cite{borkan2019nuanced,koh2021wilds}, a popular dataset for spurious correlation analysis, we manipulate the proportions of majority (spurious) and minority (non-spurious) group representations to construct training sets with differing degrees of spurious correlations. We then fine-tune Llama-3 using these specially prepared datasets. The subsequent evaluation involves comparing the model's accuracy and ADCE on the majority and minority group test sets, as depicted in Figure \ref{fig:spurious}.

As the level of spurious correlations increases in the majority group, LLMs maintain high accuracy in the majority group, misleadingly predicting based on spurious attributes (i.e., identity information). Conversely, ADCE decreases, revealing the model's shift towards surface (spurious) structures over deep structure (i.e., core semantics). In contrast, in the minority group without spurious correlations, both accuracy and ADCE show consistent trends. This supports the argument in \cref{section: DCE-acc} that, in the presence of spurious correlations, ADCE provides a better measure of the model's reliance on deep structure compared to accuracy, without being artificially inflated by spurious attributes. More details on dataset construction and fine-tuning are presented in \cref{app:spurious correlation}.

\section{Related Work}
\label{section:related work}
Our related work primarily addresses the ongoing debate regarding LLMs' ability to comprehend deep and surface structure. Existing research has predominantly focused on LLMs' sensitivity to surface structure by modifying superficial patterns, such as substituting celebrity names, introducing misleading contexts \cite{jiang2024peek,gonzalez2024does}, or altering the order of independent statements and options \cite{jiang2024peek,hooda2024large,turpin2024language}. These studies observe LLMs' lack of robustness through token-level and sentence-level interventions without altering core semantics, suggesting that LLMs' success relies heavily on recognizing surface structure. More aligned with our work, \cite{srivastava2023beyond} attempted a systematic analysis of the differences between in-context learning (ICL) and instruction-tuning (IT) in LLMs' understanding of domain knowledge in mathematical problems. They found that ICL better helps LLMs distinguish between deep and surface structure. These works inspire our research, which is more comprehensive and widely applicable to analyze LLMs' capacity for understanding deep and surface structure.

\section{Conclusion}
\label{section:Conclusion}
This paper investigate LLMs' comprehension abilities of deep and surface structures, proposing ADCE and AICE for quantification based on causal mediation analysis. ADCE analyses reveal LLMs' deep structure understanding across multiple tasks, potentially from activated task-specific knowledge in the training data. The comparison between ADCE and AICE reveals that closed-source LLMs comprehend deep structure better, while open-source LLMs exhibit higher surface sensitivity, which decreases as model scale increases. We demonstrate ADCE's superiority over accuracy in reflecting bidirectional deep structure-output relationships. This work hopes to provide new insights into LLMs' comprehension ability and offer novel methods for LLMs evaluation.




\bibliography{neurips_2024}
\bibliographystyle{unsrt}


\newpage

\appendix
\section{More Examples of Surface and deep structure}
\label{app:More Examples of Surface and deep structure}
In this section, we will provide more examples to illustrate the deep structure (core semantics) and surface structure (surface forms) of different inputs. \cref{tab:question_comparison} lists examples of 2-digit multiplication \citep{srivastava2023beyond}. We then present the deep and surface semantics for the remaining four tasks described in \cref{section:Experiment Setup}.

\begin{itemize}[leftmargin=1em,nosep]
\setlength\itemsep{0em}
    \item Word Unscrambling \citep{srivastava2023beyond}: both Word Unscrambling task and  2-Digit Multiplication task have unified question templates and key tokens that reflect the core semantics.  In Word Unscrambling, the question template is typically \textit{The word X is a scrambled version of the English word}, where \textit{X} is the scrambled word, such as \textit{ofr} (a scrambled version of \textit{for}). The key token reflecting the core semantics is \textit{X}. Changes in surface structure, such as rephrasing the question to \textit{How can the scrambled letters ofr be rearranged to form a valid English word?}, do not alter the answer to the problem.       
    \item GSM8k \citep{cobbe2021gsm8k}: GSM8k is a dataset of multi-step reasoning elementary math problems with diverse question formats. For example: \textit{A robe takes 2 bolts of blue fiber and half that much white fiber. How many bolts in total does it take?} The key tokens representing core semantics are numbers, quantifiers, etc. (e.g., \textit{2}, \textit{half}). Changing the surface structure, such as using symbolic notation, does not alter the problem's essence:
    \begin{align*}
    X = 2,\quad Y = X/2,\quad X + Y = ?
    \end{align*}
    Where $X$ is blue fiber amount, $Y$ is white fiber amount, and $?$ is the total.
    \item Analytic Entailment  \citep{srivastava2023beyond}: Analytic Entailment is a task of determining logical relationships between sentences. The question format varies, for example: \textit{Lina met two nurses.Lina met at least one woman.} The deep structure in Analytic Entailment is manifested in logical relationships and semantic inference, lacking uniform key tokens for core semantics. Altering the surface structure, such as: \textit{Lina met two female nurses. Lina did not meet at least one woman.} does not change the nature of the task.
    \item CommonsenseQA \citep{talmor2018CommonsenseQA}: CommonsenseQA, like Analytic Entailment, lacks a uniform question template. For example: \textit{A revolving door is convenient for two direction travel, but it also serves as a security measure at a what?}. Its deep structure stems from understanding the question and context, without specific key tokens representing core semantics. Altering the surface structure, such as:\textit{A revolving door is commonly used for easy entry and exit, but it also serves as a secure barrier between the outside and inside at a what?} does not change the answer, as the core concept remains intact.
\end{itemize}

\section{The Causal Mediation Analysis}
\label{app:The Causal Mediation Analysis}
Causal Mediation Analysis (CMA) is a statistical method used to explain how an independent variable affects a dependent variable through one or more mediating variables \citep{baron1986moderator,imai2010general,coffman2021tutorial}. This analytical approach is widely applied in many fields, such as psychology, sociology, and epidemiology \citep{mackinnon2012introduction,richiardi2013mediation,walters2018applying}. Traditional mediation analysis is primarily quantifying mediation effects by comparing total (TE), direct (DCE), and indirect (ICE) causal effects \citep{rubin1974estimating,bollen2009causal,vanderweele2009marginal}.

CMA places traditional mediation analysis within the potential outcomes framework \citep{rubin2005causal}, using counterfactual reasoning to define and estimate causal effects \citep{pearl2001direct}. This approach not only handles more complex mediation models but also better addresses confounding factors and sensitivity analyses \citep{imai2010general}. A typical CMA framework comprises a treatment ($A$), a mediator ($M$), and an outcome ($Y$). Both $A$ and $M$ are observable variables that simultaneously influence $Y$. The primary objective of causal mediation analysis is to assess the causal effect of $A$ on $Y$ while isolating the influence of $M$ as illustrated in Figure \ref{fig:cma_example}.

\begin{figure}[t!]
\centering
\begin{minipage}[t]{0.46\textwidth}
\vspace{-10pt}
    \centering
    \includegraphics[width=0.7\textwidth]{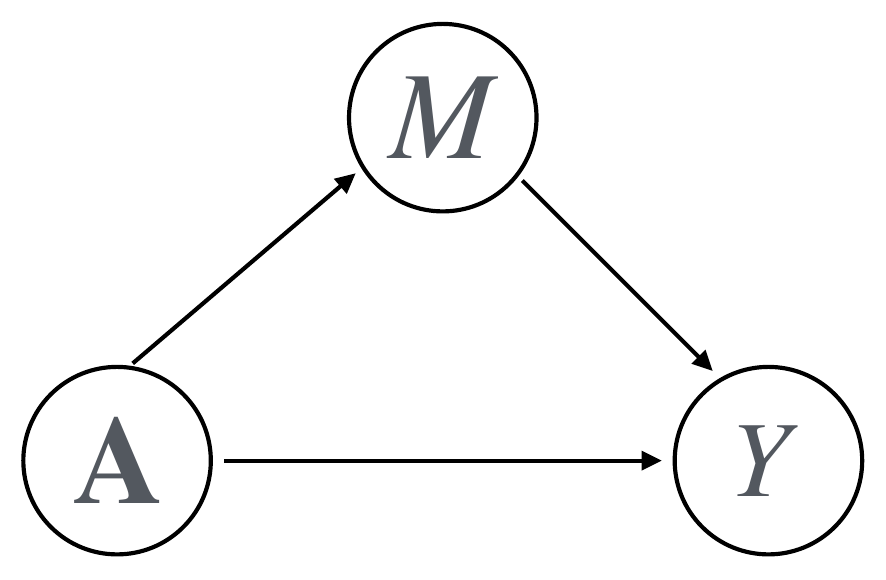}
    \parbox[t]{1\textwidth}{%
        \caption{Typical mediation analysis graph with treatment ($A$), mediator ($M$) and outcome ($Y$).}%
        \label{fig:cma_example}%
    }
\end{minipage}
\hfill
\begin{minipage}[t]{0.50\textwidth}
\vspace{-30pt}
    \centering
     \includegraphics[width=0.7\textwidth]{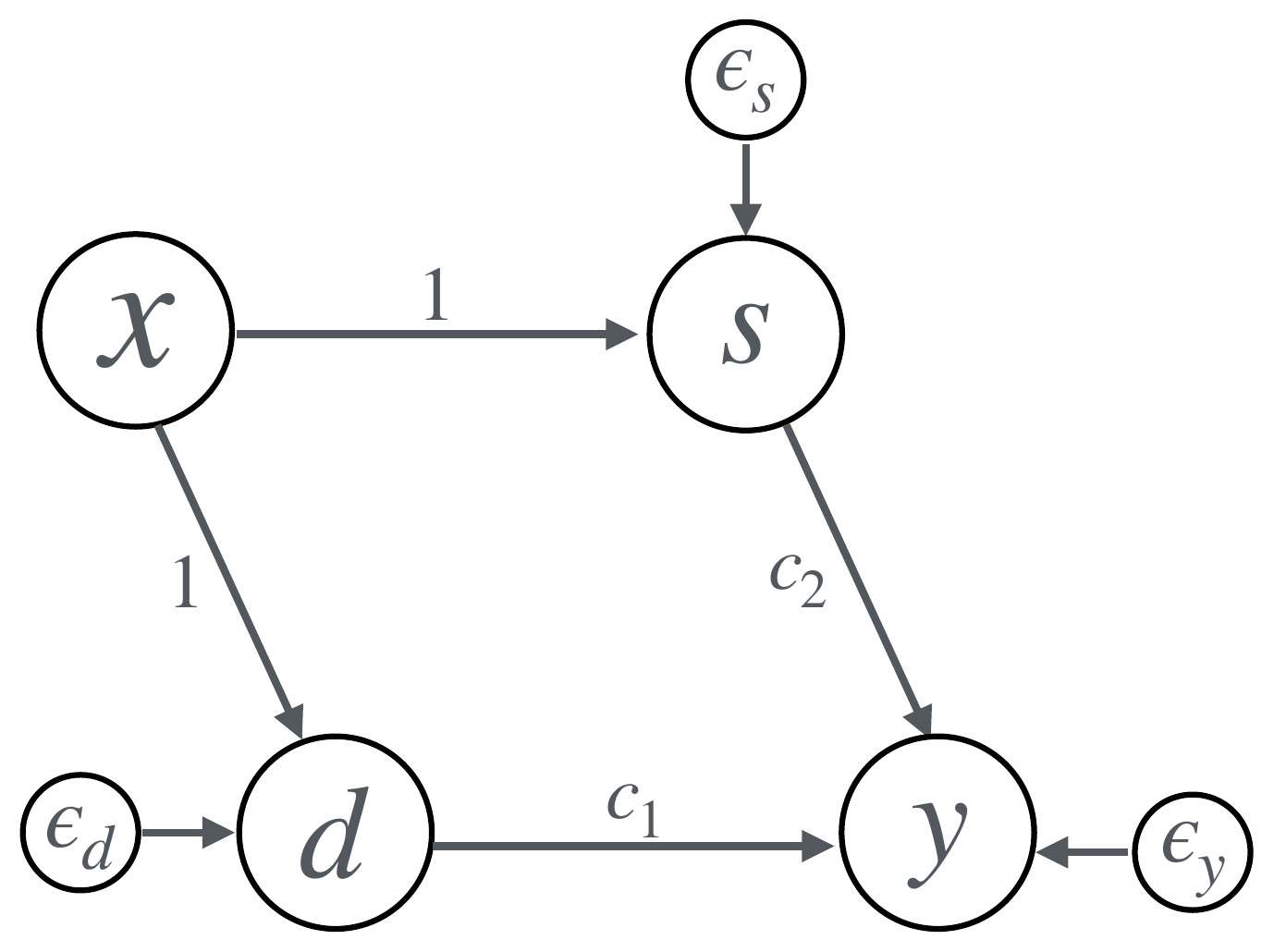}
    \parbox[t]{1\textwidth}{%
        \caption{{The Causal Graph of Synthetic Data which shares an identical causal graph as the interested intrested causal graph in Figure~\ref{fig:causal_mediation_graph}}.}%
        \label{fig:sy_causal_graph}%
    }
\end{minipage}
\vspace{-0.2in}
\end{figure}


In recent years, causal mediation analysis has also been widely applied in machine learning and artificial intelligence, providing new perspectives for explaining model decision processes and fairness assessments \citep{zhang2018fairness,nabi2018fair}.

{It is important to emphasize that CMA is frequently applied to the traditional mediation model ($x \to z \to y$ and $x \to y$). Instead, we employ a variant of the classic causal mediation model known as the Parallel Multiple Mediator Model \citep{preacher2008asymptotic,bolin2014introduction,vanderweele2014mediation}. In our model, the deep structure ($d$) and surface structure ($s$) serve as two parallel mediators for the input x. The specific causal paths can be represented as $\vx \to d \to Y$ and $\vx \to s \to Y$.}

{Despite structural differences, our parallel multiple mediator model aligns with traditional mediation models in key aspects. Like classic mediation models, we also can decompose the total causal effect (TE: $\vx \to Y$) into two parallel pathways: a direct causal effect (DCE: $\vx \to d \to Y$) through our variable of interest (deep structure $d$), and an indirect causal effect (ICE: $\vx \to s \to Y$) through the mediator (surface structure $s$). This decomposition mirrors the $x \to y$ and $x \to z \to y$ paths in traditional models and ensures that the relationship between TE, ICE, and DCE in Equation \ref{eq:DCE-causal estimand} holds. Additionally, our model satisfies key assumptions of causal mediation analysis which will be discussed in Appendix \cref{app:The Causal Mediation Analysis Assumptions}. This fundamental consistency enables the application of established causal mediation methods to our model.}

\subsection{Assumptions in Causal Mediation Analysis}
\label{app:The Causal Mediation Analysis Assumptions}
To empoly thecausal mediation analysis, there are three positivity, consistency, and sequential ignorability need to be satisfied \citep{rubin1974estimating, vanderweele2009conceptual, cole2009consistency, coffman2021tutorial, nguyen2022clarifying,qin2024introduction}.

\textbf{Positivity Assumption.} This assumption ensures that for all possible combinations of conditions, we can observe samples with non-zero probability, thereby allowing reliable estimation of causal effects. That is
\begin{myassumption}
\label{assumption: Positivity Assumption}
(Positivity Assumption)
For treatment ($A$), mediator ($M$), and an outcome ($Y$) in Figure \ref{fig:cma_example}, it holds that:
\begin{itemize}[leftmargin=1em,nosep]
    \item For the treatment variable $A$:
    \[ \mathbb{P}(A = a ) > 0, \quad \forall a \in \mathcal{A}, \]
    where $\mathcal{A}$ is the set of all possible values of $A$.

    \item For the mediator variable $M$:
    \[ \mathbb{P}(M = m | A = a) > 0, \quad \forall m \in \mathcal{M}, a \in \mathcal{A}\]
    where $\mathcal{M}$ is the set of all possible values of $M$.

    \item For the outcome variable $Y$:
    \[ \mathbb{P}(Y = y | A = a, M = m) > 0, \quad \forall y \in \mathcal{Y}, a \in \mathcal{A}, m \in \mathcal{M}\]
    where $\mathcal{Y}$ is the set of all possible values of $Y$.
\end{itemize}
\end{myassumption}
The positivity assumption is satisfied in our causal model. While as depicted in Figure \ref{fig:causal_mediation_graph}, the intervention on the deep structure $d$ invariably induces a change in the surface structure $s$, for any given $d$, there exists a non-zero probability of observing each possible value of $s$ within the set $\mathcal{S}(d)$, where $\mathcal{S}(d)$ represents the range of $s$ values consistent with $d$. Thus, the essence of the positivity assumption—enabling causal inference for all structurally possible scenarios—is maintained, allowing for valid causal analysis within the model's defined constraints.

\textbf{Consistency Assumption.} The consistency assumption states that:When the treatment variable matches the theory potential treatment, the observed outcome in experiments should equal the potential outcome theoretically. Similarly, when the treatment variable matches, the observed mediator value in experiments should equal the potential mediator value theoretically. That is
\begin{myassumption}
   (Consistency Assumption)
For treatment ($A$), mediator ($M$), and an outcome ($Y$) in Figure \ref{fig:cma_example}, for individual $i$, it holds that: 
\begin{align*}
Y_i(a, M_i(a)) = Y_i \quad \text{when} \quad A_i = a,
\end{align*}
where $Y_i(a, M_i(a))$ is the potential outcome for individual $i$ under treatment $a$ and the corresponding potential mediator value $M_i(a)$, $Y_i$ is the observed outcome for individual $i$. 
\begin{align*}
M_i(a) = M_i \quad \text{when} \quad A_i = a
\end{align*}
where $M_i(a)$ is the potential mediator value for individual $i$ under treatment $a$, $M_i$ is the observed mediator value for individual $i$,  $A_i$ is the observed treatment for individual $i$.
\end{myassumption}

In our study, all relevant variables are encompassed in Figure \ref{fig:causal_mediation_graph}, thus precluding the existence of unobserved factors that could influence the mediator or outcome variables. Consequently, the consistency assumption is satisfied.

\textbf{Sequential Ignorability Assumption} Sequential ignorability involves two assumptions: (a) Conditional on the observed pre-treatment covariates, the treatment is independent of all potential outcomes and mediator values; (b) Conditional on the observed treatment and pre-treatment covariates, the observed mediator is independent of all potential outcomes. That is

\begin{myassumption}
For treatment ($A$), mediator ($M$), and an outcome ($Y$) in Figure \ref{fig:cma_example}, for individual $i$, it holds that: 
\begin{align*}
     \text{(a)} \quad \{Y_i(a',m), M_i(a)\} \perp\!\!\!\perp A_i, \quad \forall a, a', m
\end{align*}
\begin{align*}
  \text{(b)} \quad Y_i(a',m) \perp\!\!\!\perp M_i(a) | A_i = a, \quad \forall a, a', m
\end{align*}
where $\perp\!\!\!\perp$ denotes statistical independence. $Y_i(a',m)$ is the potential outcome for under treatment $a'$ and mediator value $m$, $M_i(a)$ is the potential mediator value for unit $i$ under treatment $a$ and $A_i$ is the treatment assignment for $i$.
\end{myassumption}
Figure \ref{fig:causal_mediation_graph} presents a comprehensive causal graph encompassing all relevant variables and their causal relationships in this study. This completeness ensures the absence of unmeasured confounders. Furthermore, the independence between deep structure and surface variables structure is explicitly established. The completeness and independence jointly facilitate the satisfaction of the Sequential Ignorability Assumption \citep{imai2010general}.

\subsection{Causal Effects in Causal Mediation Analysis}
\label{app:DCE-ICE-TE}
Then, we introduce important causal estimands in the CMA framework, which characterize the causal effects between different variables. 
Consider the relationships between treatment ($A$), mediator ($M$), and an outcome ($Y$), all of them binary variables with values  $0$ or $1$. Depending on the different values of the treatment and mediator variables, the causal effects between them primarily include the following types \citep{robins1992identifiability,pearl2001direct,vanderweele2013three}:
\begin{itemize}[leftmargin=1em,nosep]
\setlength\itemsep{0em}
    \item \textbf{Total Effect (TE):}
    \begin{align}
        \mathrm{TE} = \mathrm{E}[Y(A=1,M(1)) - Y(A=0,M(0))]
    \end{align}
    \item \textbf{Total Direct Effect (TDE):}
    \begin{align}
        \mathrm{TDE} = \mathrm{E}[Y(A=1,M(1)) - Y(A=0,M(1))]
    \end{align}
     \item \textbf{Pure Indirect Effect (PIE):}
    \begin{align}
        \mathrm{PIE} = \mathrm{E}[Y(A=0,M(1)) - Y(A=0,M(0))] 
    \end{align}
\end{itemize}

Here, $Y(A=a,M(a))$ represents the value of $Y$ when $A=a$ and $M$ takes the value it would have when $A=a$. The total effect (TE) can be decomposed into direct effect and indirect effect \citep{robins1992identifiability,pearl2001direct,vanderweele2013three}, i.e.,
\begin{align}
\label{eq:te-tde-pie}
    \mathrm{TE} = \mathrm{TDE} + \mathrm{PIE}
\end{align}
ADCE in \cref{eq:black-DCE} emphasizes deep structure' direct effect on the outcome, controlling mediator $s$ at post-intervention state (i,e., $s(T=1)$). This control is necessary as changes in $d$ inevitably affect $s$. Thus, with intervention $T=1$, we can only fix $s$ at $s(T=1)$ instead of $s(T=0)$. ADCE characterized in \Eqref{eq:black-DCE} is actually the Total Direct Effect (TDE), while ICE is in fact the Pure Indirect Effect (PIE). Their relationship satisfy \Eqref{eq:te-tde-pie}. For a more understandable notation, we use the simpler concepts of ADCE and ICE in the main text to replace TDE and PIE.

\section{Probability of Sufficiency, Necessity and Proof}
\label{app:Details on PS, PN and Proof}

\subsection{Probability of Sufficiency and Necessity}
\label{app:ps and pn}
For two variables $X$ and $Y$, a sufficient condition is expressed as if $X$, then $Y$ ($X \rightarrow Y$), implying that the occurrence of $X$ inevitably leads to $Y$. Conversely, a necessary condition is expressed as $Y$ only if $X$ ($Y \rightarrow X$), indicating that the occurrence of $Y$ presupposes the prior existence of $X$.

We interpret above concepts from the probabilistic perspective, the Probability of Necessity (PN) and the Probability of Sufficiency (PS) \citep{pearl2000modelsreasoning}. PN measures that quantifies the relationship between two boolean variables $X$ and $Y$, defined as $PN(x, y) := P(y'_{x'}|x, y)$. Here, $y'_{x'}$ represents the counterfactual value of $Y = y'$ had X been set to a different value $x'$. By conditioning on both $X = x$ and $Y = y$, this measure reflects the likelihood of observing a different outcome in the absence of the event $X = x$. On the other hand, PS is defined as $PS(x, y) := P(y_x|x', y')$, which measures the probability that $X = x$ results in $Y = y$.

Since PN and PS cannot be estimated through observational data unless $Y$ is monotonic with respect to $X$ \citep{tian2000probabilities}. Therefore, we assume monotonicity of $Y$ with respect to $X$ and express PN and PS in computable forms as follows \citep{tian2000probabilities,gonzalez2024does}:
 \begin{align}
       & \delta_{\mathrm{PN}}= \frac{\mathbb{P}(Y=y)-\mathbb{P}(Y=y|\mathrm{do}(X=x'))}{\mathbb{P}(X=x,Y=y)},\\
        &\delta_{\mathrm{PS}} = \frac{\mathbb{P}(Y=y|\mathrm{do}(X=x))-\mathbb{P}(Y=y)}{\mathbb{P}(X=x',Y=y')}.
    \end{align}
The monotonicity assumptions and equations provide the foundation for the proof of \cref{theorem:ps-pn}.

\subsection{The Proof Details}
\label{app:The Proof Detail}
In this section, we provide the proof details of \cref{theorem:ps-pn}. 
\begin{theorem}
(Restatement of \cref{theorem:ps-pn}) 
Let $T$ be the treatment variable in \Eqref{eq:treatment} and $\hat{Y}$ the outcome of the indicator function in \Eqref{eq:black-DCE}. Assume $\hat{Y}$ is monotonic with respect to $T$, for DCE, it holds that:
\begin{align}
    \delta_{\mathrm{DCE}} &= \frac{\alpha}{2}\cdot\delta_{\mathrm{PS}} + \frac{\beta}{2}\cdot\delta_\mathrm{PN}
\end{align}
where $\alpha:=\mathbb{P}(\hat{Y}=1|T=1,s(T=1))$, $\beta:=\mathbb{P}(\hat{Y}=0|T=0,s(T=0))$.
\end{theorem}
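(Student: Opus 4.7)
The plan is to reduce both sides of the identity to closed-form expressions in $\alpha$ and $\beta$ and verify the algebraic match. First, I would evaluate $\delta_{\mathrm{ADCE}}$ directly from Eq.~\eqref{eq:black-DCE}. Since $\hat{Y}$ is the Boolean indicator of deviation from the original output, each expectation collapses to a probability, giving
\begin{equation*}
\delta_{\mathrm{ADCE}} = \mathbb{P}(\hat{Y}=1\mid T=1, s(T=1)) - \mathbb{P}(\hat{Y}=1\mid T=0, s(T=0)) = \alpha - (1-\beta) = \alpha+\beta-1,
\end{equation*}
using the definitions of $\alpha$ and $\beta$.

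Second, I would invoke the monotonicity assumption---that $\hat{Y}_{T=1}\ge \hat{Y}_{T=0}$ pointwise---to rewrite $\delta_{\mathrm{PN}}$ and $\delta_{\mathrm{PS}}$ in the closed forms reviewed in Appendix~\ref{app:ps and pn}. Treating $T$ as exogenous with balanced assignment $\mathbb{P}(T=0)=\mathbb{P}(T=1)=1/2$ (justified by the paired TE/AICE construction in Section~\ref{section:The Generation of Intervention Data}), and using consistency so that $\mathbb{P}(\hat{Y}=1\mid \mathrm{do}(T=t))=\mathbb{P}(\hat{Y}=1\mid T=t,s(T=t))$, I would compute
\begin{align*}
\mathbb{P}(\hat{Y}=1) &= \tfrac{1}{2}\alpha+\tfrac{1}{2}(1-\beta), \\
\mathbb{P}(T=1,\hat{Y}=1) &= \tfrac{\alpha}{2}, \qquad \mathbb{P}(T=0,\hat{Y}=0) = \tfrac{\beta}{2}.
\end{align*}
Substituting into the PN/PS formulas collapses each numerator to $(\alpha+\beta-1)/2$, yielding compact expressions whose denominators are $\alpha/2$ and $\beta/2$ respectively.

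Finally, I would plug these into the weighted combination $\tfrac{\alpha}{2}\delta_{\mathrm{PS}}+\tfrac{\beta}{2}\delta_{\mathrm{PN}}$. The weighting coefficients are designed to cancel against the normalizers from the PN and PS denominators, so each term reduces to $\tfrac{1}{2}(\alpha+\beta-1)$, and their sum equals $\alpha+\beta-1$, matching $\delta_{\mathrm{ADCE}}$ from the first step.

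The main obstacle will be aligning the intervention semantics used by $\delta_{\mathrm{ADCE}}$---which explicitly conditions on a particular mediator realization $s(T=t)$---with the marginal interventional probabilities $\mathbb{P}(\hat{Y}=1\mid \mathrm{do}(T=t))$ appearing in the PN/PS formulas. This step requires arguing that either the mediator $s$ is functionally determined by $T$ in the paper's intervention construction, or that monotonicity together with the sequential ignorability assumption from Appendix~\ref{app:The Causal Mediation Analysis Assumptions} absorbs the residual surface-structure variability so the two probabilities coincide. A secondary subtlety is verifying that monotonicity of $\hat{Y}$ in $T$ is the right directional assumption (i.e.\ that jointly altering deep and surface structure cannot suppress an output deviation produced by altering only surface), which is needed to convert the counterfactual probabilities in the definitions of PN and PS into the observational quantities used above.
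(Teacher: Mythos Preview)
Your proposal is correct and uses essentially the same ingredients as the paper's proof: the monotonicity-based identifiability formulas for $\delta_{\mathrm{PN}}$ and $\delta_{\mathrm{PS}}$ from Appendix~\ref{app:ps and pn}, the balanced assignment $\mathbb{P}(T=0)=\mathbb{P}(T=1)=\tfrac12$, and the reduction $\mathbb{P}(\hat Y\mid T=t,s(T=t))=\mathbb{P}(\hat Y\mid T=t)$. On the last point---your ``main obstacle''---the paper takes exactly the first route you suggest: since $T$ deterministically fixes $s(T)$, conditioning on $s(T=t)$ in addition to $T=t$ is redundant, so the mediator-conditioned probabilities coincide with the interventional ones.

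The only stylistic difference is that the paper does not compute $\delta_{\mathrm{PN}}$ and $\delta_{\mathrm{PS}}$ explicitly in $\alpha,\beta$ and then verify the identity; instead it telescopes directly by inserting $\mathbb{P}(\hat Y=0)$:
\[
\mathbb{P}(\hat Y=0\mid T=0)-\mathbb{P}(\hat Y=0\mid T=1)=\underbrace{\bigl[\mathbb{P}(\hat Y=0\mid T=0)-\mathbb{P}(\hat Y=0)\bigr]}_{=\delta_{\mathrm{PS}}\cdot\mathbb{P}(T=1,\hat Y=1)}+\underbrace{\bigl[\mathbb{P}(\hat Y=0)-\mathbb{P}(\hat Y=0\mid T=1)\bigr]}_{=\delta_{\mathrm{PN}}\cdot\mathbb{P}(T=0,\hat Y=0)},
\]
and then reads off the weights $\tfrac{\alpha}{2}$ and $\tfrac{\beta}{2}$ from the joint probabilities under balanced $T$. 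Your explicit computation and the paper's telescoping are two presentations of the same one-line algebra; the telescoping version makes the cancellation you describe (weights versus PN/PS denominators) visible without ever writing down $\alpha+\beta-1$.
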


\begin{proof}
We first define two binary variables as: 
Let $T$ be the treatment variable in \Eqref{eq:treatment} 
\begin{align*}
    T = \begin{cases}
        0 & \text{intervention alters $s_i$, preserves  $d_i$}\\
        1 & \text{intervention alters both $s_i$ and $d_i$}
    \end{cases}
\end{align*}
and $\hat{Y}$ the outcome of the indicator function in \Eqref{eq:black-DCE}. 
\begin{align*}
\hat{Y} = 
\begin{cases}
0 & \text{if } {Y}^{\mathrm{post}} = {Y}^{\mathrm{pre}} \\
1 & \text{if } {Y}^{\mathrm{post}} \neq {Y}^{\mathrm{pre}}
\end{cases}
\end{align*}
where ${Y}^{\mathrm{post}}$ is the potential outcome after intervention.

Following assumptions in \citep{tian2000probabilities,gonzalez2024does}, if $\hat{Y}$ is monotonic with respect to $T$, then PN and PS can be  computed and represented as follows:
    \begin{align*}
       & \delta_{\mathrm{PN}}(T=0,\hat{Y}=0) = \frac{\mathbb{P}(\hat{Y}=0)-\mathbb{P}(\hat{Y}=0|\mathrm{do}(T=1))}{\mathbb{P}(T=0,\hat{Y}=0)}=\frac{\mathbb{P}(\hat{Y}=0)-\mathbb{P}(\hat{Y}=0|T=1)}{\mathbb{P}(T=0,\hat{Y}=0)},\\
        & \delta_{\mathrm{PS}}(T=0,\hat{Y}=0) = \frac{\mathbb{P}(\hat{Y}=0|\mathrm{do}(T=0))-\mathbb{P}(\hat{Y}=0)}{\mathbb{P}(T=1,\hat{Y}=1)}= \frac{\mathbb{P}(\hat{Y}=0|T=0)-\mathbb{P}(\hat{Y}=0)}{\mathbb{P}(T=1,\hat{Y}=1)}.
    \end{align*}
Notably, since there is no confounders between $T$ and $\hat{Y}$, $\mathbb{P}(\hat{Y}|\mathrm{do}(T=t))=\mathbb{P}(\hat{Y}=0|T=t)$ \citep{pearl2000modelsreasoning,srihari2021causality}.

According to the causal graph with mediation in Figure \ref{fig:causal_mediation_graph}, the intervention $T$ on inputs $\vx$ directly determines the state of the surface structure $s$, i.e.,
\begin{itemize}[leftmargin=1em,nosep]
    \item When $T = 1$, it necessarily leads to $s(T=1)$;
    \item When $T = 0$, it necessarily leads to $s(T=0)$.
\end{itemize}
Therefore, we have
\begin{align*}
    \mathbb{P}(\hat{Y}|T=t,s(T=t)) & = \frac{\mathbb{P}(\hat{Y},T=t,s(T=t))}{\mathbb{P}(T=t,s(T=t))}\\
    &=\frac{\mathbb{P}(s(T=t)|\hat{Y},T=t)}{\mathbb{P}(s(T=t)|T=t)} \frac{\mathbb{P}(\hat{Y},T=t)}{\mathbb{P}(T=t)}\\
    &=\mathbb{P}(\hat{Y}|T=t)
\end{align*}
Therefore, we can simplify the ADCE expression without explicitly including $s$, e.g., simplify 
$\mathbb{P}(\hat{Y}=1|T=1,s(T=1))$ as $\mathbb{P}(\hat{Y}=1|T=1)$

Then, the ADCE in \Eqref{eq:black-DCE} can be redefined as
  \begin{align*}
    \hat{\delta}_{\mathrm{ADCE}}&=\mathbb{P}(\hat{Y}=1|T=1,s(T=1))-\mathbb{P}(\hat{Y}=1|T=0,s(T=0))\\
    &= \mathbb{P}(\hat{Y}=1|T=1)-\mathbb{P}(\hat{Y}=1|T=0)\\
       & = \mathbb{P}(\hat{Y}=0|T=0)-\mathbb{P}(\hat{Y}=0|T=1)\\
       &= \delta_{\mathrm{PS}}(T=0,\hat{Y}=0) \cdot \mathbb{P}(T=1,\hat{Y}=1)+ \delta_{\mathrm{PN}}(T=0,\hat{Y}=0)\cdot \mathbb{P}(T=0,\hat{Y}=0).
    \end{align*}

With the experiment setup that $\mathbb{P}(T=1)=\mathbb{P}(T=0)=\frac{1}{2}$, we obtain
  \begin{align*}
    \hat{\delta}_{\mathrm{ADCE}}&= \frac{\mathbb{P}(\hat{Y}=1|T=1)}{2}\cdot \delta_{\mathrm{PS}}+\frac{\mathbb{P}(\hat{Y}=0|T=0)}{2}\cdot \delta_{\mathrm{PN}}.
    \end{align*}
Here, we omit $(T=0,\hat{Y}=0)$ in PS and PN terms for simplicity.
\end{proof}

\section{The Algorithm of ADCE}
\label{alg: alg adce}
Algorithm \ref{alg:CausalLLM} provides the detailed algorithmic steps required to estimate ADCE, which includes the following:
First, we perform initial inference on the full dataset to select samples with correct answers. Then, for these correctly answered samples, we apply interventions using two strategies: Masking and Rephrasing. Finally, we conduct a second round of inference on the intervened samples and calculate ADCE based on the inference results.

\begin{algorithm}[h]
\caption{Approximated Direct Causal Effect (ADCE) Estimation in LLMs}
\label{alg:CausalLLM}
\DontPrintSemicolon
\SetAlgoLined
\KwIn{Dataset $\mathcal{D}=\{\vx_i,y_i\}^n_{i=1}$, LLM $f_{\boldsymbol{\theta}}$, intervention strategy $\mathcal{I}$}
\KwOut{Estimated ADCE}

\vspace{1mm}
\textbf{Stage 1:} Initial Inference on Full Data\;
$\mathcal{D}_c \leftarrow \{\vx_i \in \mathcal{D} : f_{\boldsymbol{\theta}}(\vx_i) = y_i\}$ \tcp*{Collect correctly answered samples}
$Y_{pre} \leftarrow f_{\boldsymbol{\theta}}(\mathcal{D}_{c})$ \tcp*{Original Outcome}

\vspace{1mm}
\textbf{Stage 2:} Generate Intervention Data (Alg. \ref{alg:InterventionDataGen})\;
$\mathcal{D}_{T=1}, \mathcal{D}_{T=0} \leftarrow \mathcal{M}_\mathcal{I}(\mathcal{D}_c)$\;

\vspace{1mm}
\textbf{Stage 3:} Re-Inference on Intervention Data\;
\For{$i \in \{0, 1\}$}{
    $Y(T=i, s(T=i)) \leftarrow f_{\boldsymbol{\theta}}(\mathcal{D}_{T=i})$
    \tcp*{Potential Outcomes for TE and AICE}
}

\vspace{1mm}
\textbf{Stage 4:} Estimate ADCE via ~\Eqref{eq:black-DCE}\;
\Return Estimated ADCE
\end{algorithm}

\begin{figure}[t!]
\centering
    \hfill
    \subfigure[{Unnormalized Causal Effects}]{\label{fig:ce comprasion}\includegraphics[width=0.48\linewidth]{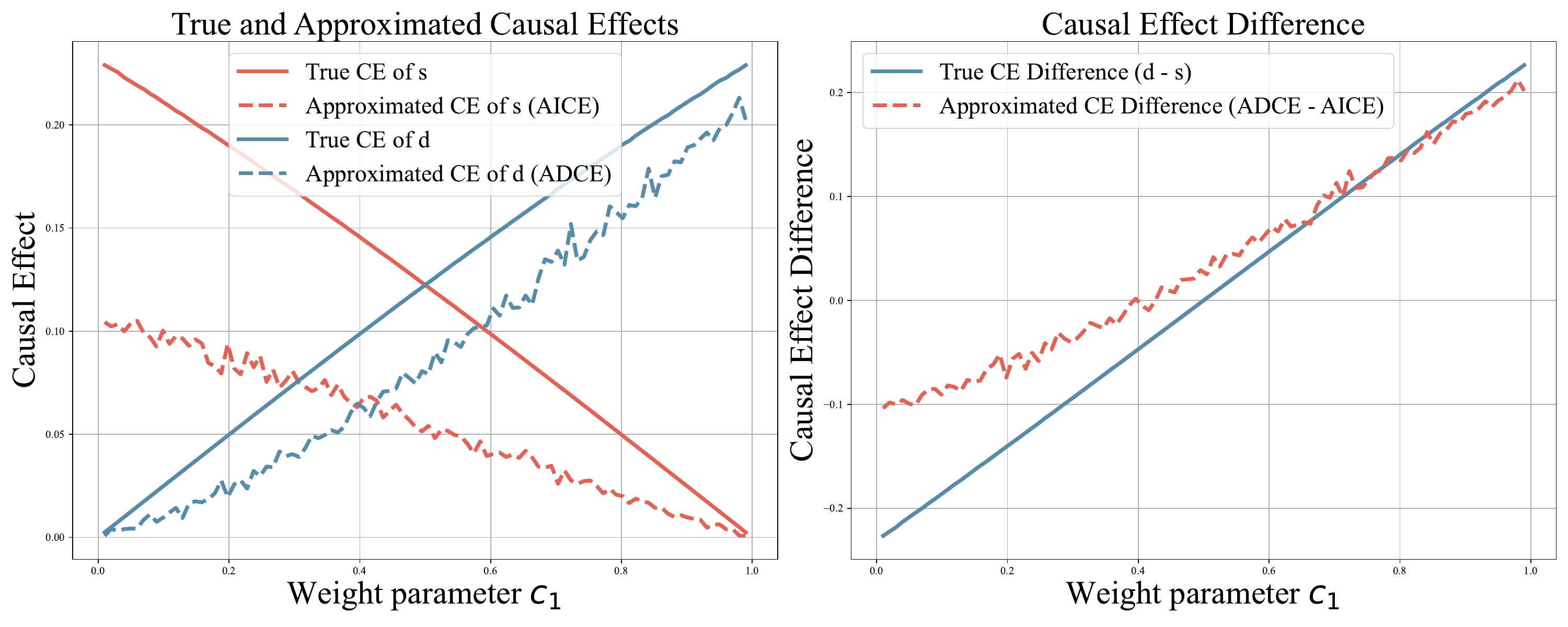}}
    \hfill
    \subfigure[{Normalized Causal Effects}]{\label{fig:normalized ce comprasion}\includegraphics[width=0.48\linewidth]{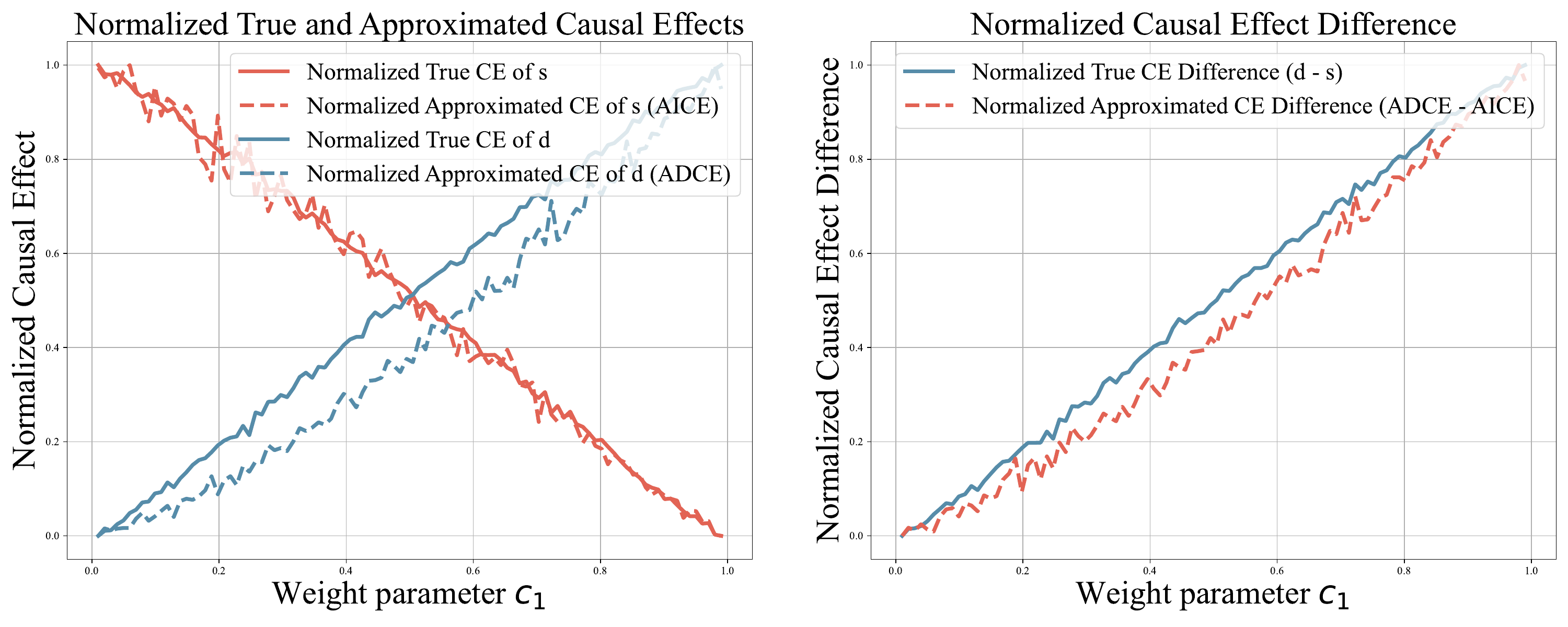}}
    \hfill
\vspace{-0.15in}
\caption{{Comparison of True Causal Effects (True CE of $d$ and $s$) and Approximated Causal Effects (Approximated CE of $d$ and $s$ i.e., ADCE and AICE) on synthetic data. With known true causal effects, both the true and approximated causal effects of $d$ and $s$ on the model's output demonstrate consistent trends. The differences in causal effects between $d$ and $s$ also show similar patterns. After normalization, the true causal effects and approximated causal effects align more closely.}}
\vspace{-0.15in}
\label{fig:causal_effects}
\end{figure}

{
\section{Experiments on Synthetic Data}
\label{sec:Experiments on Synthetic Data}
In this section, we validate our proposed framework using synthetic data where true causal effects can be calculated to evaluate the effectiveness of ADCE and AICE. We base our synthetic data on the simplified causal graph shown in Figure~\ref{fig:causal_mediation_graph}, which represents real scenarios. Our model considers four key variables:  input $\vx$, deep structure $d$, surface structure $s$ and outputs $y$. The synthetic data we generate adheres to the causal graph presented in Figure~\ref{fig:sy_causal_graph} and follows the Structural Causal Models (SCM) \citep{pearl2009causality} described as follow.
\begin{align}
\label{eq:sem}
x \sim \mathcal{N}(0, 1),\quad d = x + \epsilon_d, \quad s = x + \epsilon_s.
\end{align}
\begin{align}
\label{eq:y}
y = \begin{cases}
1, & \text{if } \sigma(c_1\cdot d + c_2 \cdot s + \epsilon_y) > 0.5 \\
0, & \text{otherwise}
\end{cases}
\end{align}
where we consider an independent small noise $\epsilon_d \sim \mathcal{N}(0, 0.25)$ and $\epsilon_s \sim \mathcal{N}(0, 0.25)$. And the independent noise $\epsilon_y \sim \mathcal{N}(0, 1)$ and $\sigma(\cdot)$ is Sigmoid function. $c_1$ and $c_2$ are weight parameters for $d$ and $s$, respectively. Analogously, larger $c_1$ (or $c_2$) indicate more prominent deep (or surface) structure signals in inputs. Equations \ref{eq:sem} and \ref{eq:y} are simplification of the true causal graph shown in Figure~\ref{fig:causal_mediation_graph} which  reduces $d$, $s$, and $\vx$ to scalars and assumes they exhibit simple linear relationships. Despite simplification, this SCM retains the key causal relationships in Figure 3, where $x$'s effect on  is mediated through two paths: $x \to d \to y$ and $x \to s \to y$.}
{Then, we generate the training data and train a logistic regression $f$ with explicit functions and parameters, ensuring clear model's dependencies on $d$ and $s$ for outputs. Explicit functions and parameters enable direct computation of true causal effects for ADCE and AICE validation. Specially, we generate 100000 training samples for model $f$, defining true causal effects of $f$'s dependence on $d$ and $s$ as their respective average marginal effects (AMEs) \citep{schennach2007estimating,breen2018interpreting,aguirregabiria2024identification}. AMEs represent average output changes when only $d$ or $s$ increases by one unit. Via predictiong on 10000 test samples, we compute (1) TE in Equation \ref{eq:black-DCE} by setting $d=0$ and $s' = s + \epsilon_{s'}$ where  $\epsilon_{s'} \sim \mathcal{N}(0, 0.25)$, (2) AICE in Equation \ref{eq:black-DCE} by setting $s = s'$  where we use the same $s'$ in TE and (3) ADCE in Equation \ref{eq:black-DCE} by calculating $\text{ADCE} = \text{TE} - \text{AICE}$.}

{Figure \ref{fig:ce comprasion} shows how true causal effects of $s$ and $d$ on model output change as $d$'s weight $c_1$ increases. As $c_1$ rises, the logistic model's more dependent on deep structure for outputs with increased $d$'s true causal effect and decreased $s$'s true causal effect. The estimated versions, ADCE and AICE, follow similar trends, validating their effectiveness. Figure \ref{fig:ce comprasion} also displays the difference between $d$ and $s$ causal effects. The estimated difference aligns with the true difference, supporting our comparative results in Section \ref{section: Deep vs. Surface}. Furthermore, true causal effects range from $0$ to $0.25$, while ADCE spans $[-1, 1]$, hindering direct comparisons. We normalize both causal effects to $[0, 1]$ for fair comparison in Figure \ref{fig:normalized ce comprasion}. The normalized estimates align closely with true effects, with difference curves align more closely, further validating ADCE and AICE.}

\section{Datasets and Models}
\subsection{Details of Generating Intervention Datasets: Method and Data Size}
\label{app:Details on Intervention}
\subsubsection{Intervention Method}
In this section, we first outline the detailed process for generating the intervention data required for computing TE and ICE in Algorithm \ref{alg:InterventionDataGen}.

\begin{algorithm}[h]
\caption{Intervention Data Generation Method $\mathcal{M}$}
\label{alg:InterventionDataGen}
\KwIn{Correctly answered samples $\mathcal{D}_c = \{(\vx_i, y_i)\}$, LLM $f_{\boldsymbol{\theta}}$, intervention strategy $\mathcal{I}$, and LLM agent $\mathcal{C}$}
\KwOut{Intervention datasets $\mathcal{D}_{T=1}$, $\mathcal{D}_{T=0}$}
\For{$(\vx, y) \in \mathcal{D}_c$}{
    
    
        \eIf(\tcp*[f]{Generate $(T=1, s(T=1))$ data}){$\mathcal{I} = \text{Mask}$}{ 
            $\vx_{T=1} \leftarrow \text{MaskCoreSemantics}(\vx)$ 
        }{
            $\vx_{T=1} \leftarrow \text{RephraseByAgent}(\vx, y, \mathcal{C}, \text{``\texttt{Alter}"})$
        }
        $\mathcal{D}_{T=1} \leftarrow \mathcal{D}_{T=1} \cup \{(\vx_{T=1}, y)\}$
    
        
        \eIf(\tcp*[f]{Generate $(T=0, s(T=0))$ data}){$\mathcal{I} = \text{Mask}$}{
            $\text{tokens} \leftarrow \text{GetNonCoreSemanticTokens}(\vx)$\\
            $\text{nearestTokens} \leftarrow \text{GetKNearestTokens}(\text{tokens}, \vx_{T=1}, k)$\\
            $\vx_{T=0} \leftarrow \text{MaskTokens}(\vx,  \text{nearestTokens})$
        }{
            $\vx_{T=0} \leftarrow \text{RephraseByAgent}(\vx_{T=1}, y, \mathcal{C}, \text{``\texttt{Preserve}"})$
        }
    $\mathcal{D}_{T=0} \leftarrow \mathcal{D}_{T=0} \cup \{(\vx_{T=0}, y)\}$
}

\Return $\mathcal{D}_{T=1}, \mathcal{D}_{T=0}$
\end{algorithm}

We then provide more details on the intervention data generation according to different strategies.

\textbf{The \emph{Mask} Strategy.} For 2-Digit Multiplication, GSM8k, and Word Unscrambling tasks, we employ the \emph{Mask} strategy to construct the corresponding intervention data. We establish specific intervention word pool for each task, where intervening on words specified in these words results in disruption of the core semantics (i.e., deep structure). The post-intervention samples are used to calculate TE in \Eqref{eq:black-DCE}. Conversely, intervening on words outside these rules only causes surface structure changes, and the resulting samples are used to compute AICE in \Eqref{eq:black-DCE}. Intervening on words specified in the intervention word pool leads to changes in the deep structure of inputs. In our experiments, we select one word at a time from the pool of candidate words and replace it with $<$Mask$>$. For ICE, when masking words outside the intervention word pool, we consider the nearest non-semantic word for masking based on the word masked in TE, i.e., $k=1$.

\begin{itemize}[leftmargin=1em,nosep]
    \item 2-Digit Multiplication: We apply the \emph{Mask} strategy to all \textit{numerical digits} and the multiplication operator (\textit{times}) to induce changes in the core semantic structure. Conversely, masking any tokens other than digits and the multiplication operator is regarded as altering only the surface structure.
    \item GSM8k: For the GSM8k task, we define an intervention word pool that, when masked, alters the core semantic structure. This pool encompasses all  \textit{numerical digits} and the following lexical items representing mathematical operations and other numerical representations: \textit{\{zero, one, two, three, four, five, six, seven, eight, nine, ten, eleven, twelve, thirteen, fourteen, fifteen, sixteen, seventeen, eighteen, nineteen, twenty, thirty, forty, fifty, sixty, seventy, eighty, ninety, hundred, thousand, million, billion, times, minus, plus, divided, multiplied, dozen, twice\}}. The intervention strategy is designed to guarantee that every instance in the dataset undergoes a significant semantic transformation through the masking of one critical term from the given intervention word pool.
    \item Word Unscrambling: For the Word Unscrambling task, the question template is consistently structured as \textit{The word X is a scrambled version of the English word}, where X represents the scrambled word (e.g., \textit{X=hte} for \textit{the}, \textit{X=adn} for \textit{and}). We determine that masking the third position word (i.e., \textit{X}) alters the core semantic structure. Correspondingly, when $k=1$, masking either \textit{word} or \textit{is} only modifies the surface structure.

\end{itemize}

\begin{table}[h]    
\centering
\caption{Examples of generated paraphrases of CommonsenseQA and Analytic Entaiment datasets using Claude-3.5-Sonnet API. We carefully design our intervention strategy to ensure that $s(T=1)$ and $s(T=0)$ are as similar as possible, in order to satisfy the approximation.}
\setlength{\extrarowheight}{3pt}
\scalebox{0.8}{
\begin{tabular}{llm{0.82\linewidth}}
\toprule
\multicolumn{1}{l}{Dataset} & \multicolumn{1}{l}{State} & Text                                                                                                                   \\ \midrule
\multirow{15}{*}{CommonsenseQA}   & Origin                    & What do people aim to do at work? A: complete job                                                                               \\
                     & $T=1,s(T=1)$                & What do people primarily aim to do during work breaks? A: talk to each other                                                  \\
                     & $T=0,s(T=0)$                & What do people primarily aim to do during overtime hours?   A: complete job                                                     \\ \cline{2-3}
& Origin                    & What do people typically do while   playing guitar? A: singing                                                                  \\
                     & $T=1,s(T=1)$                & What do people typically avoid doing while playing guitar? A: cry                                                               \\
                     & $T=0,s(T=0)$                & What do people typically do simultaneously while playing   guitar? A: singing                                                   \\ \cline{2-3}
& Origin                    & After he got hired he hoped for   success at his what? A: new job                                                               \\
                     & $T=1,s(T=1)$                & After he got hired as a volunteer, he hoped for success at his what?   A: vocation                                              \\
                     & $T=0,s(T=0)$                & After he got hired as an employee, he hoped for success at his   what? A: new job                                               \\ \cline{2-3}
& Origin                    & Where would a person be doing when   having to wait their turn? A: stand in line                                                \\
                     & $T=1,s(T=1)$                & Where would a person likely be if they didn't have to wait their turn?   A: sing                                                \\
                     & $T=0,s(T=0)$                & Where would a person likely be if they had to wait their turn?   A: stand in line                                               \\ \cline{2-3}
& Origin                    & Where is a doormat likely to be in front of? A: front door                                                                      \\
                     & $T=1,s(T=1)$                & Where is a doormat least likely to be placed in front of? A: facade                                                             \\
                     & $T=0,s(T=0)$                & Where is a doormat most likely to be placed in front of?   A: front door                                                        \\ 
                     \midrule
\multirow{15}{*}{Analytic Entailment}   
& Origin                    & Sarah has a pet. So Sarah has a dog. A: no-entailment\\
                     & $T=1,s(T=1)$                & Sarah has a dog. So Sarah has a pet. A: entailment                      \\
                     & $T=0,s(T=0)$                & Sarah has a dog. Sarah has a car. A: no-entailment                       \\ 
\cline{2-3}
& Origin                    & Wendy has zero kids. So Wendy has a number of kids. A: no-entailment                                                                    \\
                     & $T=1,s(T=1)$                & Wendy has zero kids. So Wendy is childless. A: entailment                                       \\
                     & $T=0,s(T=0)$                & Wendy has zero kids. So Wendy is not childless. A: no-entailment                                        \\ 
                      \cline{2-3}
& Origin                    & Richard yelled at Ethan. Therefore Richard yelled. A: entailment                              \\
                     & $T=1,s(T=1)$                & Richard yelled at Ethan. Therefore, Ethan yelled. A: no-entailment            \\
                     & $T=0,s(T=0)$                & Richard yelled at Ethan. Therefore, Ethan was yelled at. A: entailment                    \\
                     \cline{2-3}
& Origin                    & Tom is George’s grandfather. So, George is a descendant of Tom’s. A: entailment                                                  \\
                     & $T=1,s(T=1)$                & Tom is George’s grandfather. So, George looks up to Tom. A: no-entailment        \\
                     & $T=0,s(T=0)$                & Tom is George’s grandfather. So, George is Tom's grandson. A: entailment            \\ 
\cline{2-3}
& Origin                    & The tabletop is square. So, the tabletop is rectangular. A: entailment                                                      \\
                     & $T=1,s(T=1)$                & The tabletop is square. So, the tabletop is large. A: no-entailment                                      \\
                     & $T=0,s(T=0)$                & The tabletop is square and large. So, the tabletop is large. A: entailment                                    \\
                     \bottomrule
\end{tabular}}
\label{table:paraphrase}
\end{table}

\textbf{The\emph{ Rephrase }Strategy}. We select \texttt{claude-3-5-sonnet} model as the LLM agent for paraphrase generation and define a set of templates with different utilities. Note that these templates can be customized for different tasks, which contribute to the versatility of the proposed intervention framework in intervening natural language datasets. The detailed rephrasing framework is depicted in Algorithm \ref{alg:RephraseByAgent}, which generally includes three steps: paraphrase generation, generation check, and feedback saving. First, according to the rephrasing target $\mathcal{T}$, the framework constructs prompt based on the appropriate template from \cref{table:prompt}. The prompt will then be sent to the LLM agent for rephrasing, with paraphrase $\vx'$ as the output. Next, we ask the agent to predict the label of $\vx'$. If the prediction matches the expectation, we break and return the generated text. Otherwise, we record the generated text and send feedback to LLM for the next generation. The whole process will be repeated until the agent generate the desired paraphrase.\footnote{In practice, we set the maximal iteration number as 10 to avoid prohibitive long context.} 
The examples of generated paraphrases are listed in \cref{table:paraphrase}.

\begin{algorithm}[h]
\caption{RephraseByAgent}
\label{alg:RephraseByAgent}
\KwIn{Text $\vx$, label $y$, rephrasing target $\mathcal{T}$, and LLM agent $\mathcal{C}$}
\KwOut{$\vx'$}
    \eIf(\tcp*[f]{Generate prompt for paraphrase}){$\mathcal{T}= ``\texttt{Alter}"$}{
    prompt $\leftarrow$ \cref{table:prompt}.\textbf{Template 1}
    }
    {
    prompt $\leftarrow$ \cref{table:prompt}.\textbf{Template 2}
    }
    chatHistory = prompt.format($\vx$) \tcp*[f]{Insert questions, options and the answer inside the placeholders}
    
    selfCheckFlag = False
    
    \Repeat{selfCheckFlag = True}{ 
     $\vx' \leftarrow \mathcal{C}(\text{chatHistory})$\tcp*{Step 1: Generation}
     
     predictionPrompt $\leftarrow$ \cref{table:prompt}.\textbf{Template 3}
     
     $y' \leftarrow \mathcal{C}(\text{predictionPrompt.format}(\vx'))$\tcp*{Step 2: Self-check}
     
     \If{($\mathcal{T}= ``\texttt{Alter}"$ {\bf and} $y' \neq y$) {\bf or} ($\mathcal{T}= ``\texttt{Preserve}"$ {\bf and} $y' = y$)}{
        selfCheckFlag $\leftarrow$ True
     }
     \Else{
        chatHistory $\leftarrow$ chatHistory + $\vx'$ 
        
        chatHistory $\leftarrow$ chatHistory + \cref{table:prompt}.\textbf{Template 4} \tcp*{Step 3: Feedback}
     }
    }    
    \Return $\vx'$

\end{algorithm}

\begin{table}[h]
	\centering
	\caption{Prompts for automatic causal interventions, where the text in \texttt{monospaced} font can be tailored to different tasks.}
	\begin{tabular}{m{0.95\linewidth}}
		\toprule
		\textbf{[Template 1] Rephrase \& Alter}\\
		You are an expert in \verb|natural language processing and commonsense| \verb|reasoning|. Your task is to rephrase the given \verb|commonsense| question, and then modify the paraphrase so that the modified question results in a different answer based on the provided options. The input will be in the form of a dictionary: \{`Question':`question', `Options':[`option1', `option2',...], `Answer':`ans'\}, where `Question' is the original \verb|commonsense| question, `Options' are the candidate answers, and `Answer' is the original correct answer. Output only the modified Question without any introductory phrases. \\
		Here is the input: \{`Question': \verb|[QUESTION]|,`Options':\verb|[OPTIONS]|,`Answer':\verb|[ANSWER]|\}. The modified question is:\\
		\midrule
		\textbf{[Template 2] Rephrase \& Preserve}\\
		You are an expert in \verb|natural language processing and commonsense| \verb|reasoning|. 
		Modify the keywords with minimal word changes in the `Question' to ensure the given `Answer' is the most fitting answer to the modified result among the `Options'.
		The input is in the form of a dictionary: \{`Question':`question', `Options':[`option1', `option2', ...], `Answer':`ans'\}.
		Output only the modified Question without any introductory phrases. \\
		Here is the input: \{`Question': \verb|[QUESTION]|,`Options':\verb|[OPTIONS]|,`Answer':\verb|[ANSWER]|\}. The modified question is:\\
		\midrule
		\textbf{[Template 3] Prediction}\\
		You are an expert in \verb|natural language processing and commonsense| \verb|reasoning|. 
		Below is a \verb|commonsense| question along with some answer options. 
		Choose the correct answer from these options. 
		Your output should only be the answer enclosed in parenthesis, without any introductory phrases. \\
		Question: \verb|[QUESTION][OPTIONS]|\\
		Among \verb|[INDEX_OF_FIRST_OPT]| through \verb|[INDEX_OF_LAST_OPT]|, the answer is\\
  \midrule
  \textbf{[Template 4] Feedback} \\
  The answer to the modified question is \verb|different from| the original question. Please modify the question again. Output only the modified Question.\\
		\bottomrule
	\end{tabular}%
\label{table:prompt}
\end{table}%

\subsubsection{Intervention Data Size}
In this section, we introduce the sample sizes before and after intervention.

\begin{itemize}[leftmargin=1em,nosep]
    \item 2-Digit Multiplication: For the two-digit multiplication problem, the original dataset comprised $1000$ samples. Following Algorithm \ref{alg:InterventionDataGen}, we perform interventions on correctly answered samples with accuracy $\alpha$ for each LLM $f_{\theta}$. For each sample, we generate two intervention groups with \emph{Mask} strategy: first synthesizing one sample with altered core semantics (deep structure), then based on this, synthesizing another with only surface structure changes. This process is repeated twice, resulting in $4$ intervention samples per original sample: $2$ with deep structure changes and $2$ corresponding samples with only surface structure changes.  In total, for LLM $f_{\theta}$, $4000\alpha$ intervention samples are generated (4 per original sample).
    \item GSM8k: For GSM8k, the original dataset consisted of $1319$ samples. Following Algorithm \ref{alg:InterventionDataGen}, we conduct interventions on correctly answered samples for each LLM $f_{\theta}$ with accuracy $\alpha$. For each sample, we also generate two intervention groups with \emph{Mask} strategy: first synthesizing one sample with altered core semantics (deep structure), then generating another with only surface structure changes based on this. This process is repeated twice, yielding $4$ intervention samples per original sample: $2$ with deep structure changes and $2$ corresponding samples with only surface structure modifications. In total, for LLM $f_{\theta}$, $5276\alpha$ intervention samples are generated ($4$ per original sample).
    \item Word Unscrambling: For Word Unscrambling, we sample $1000$ instances from the original full dataset. Following Algorithm \ref{alg:InterventionDataGen}, we conduct interventions on correctly answered samples for each LLM $f_{\theta}$ with accuracy $\alpha$. For each sample, we generate two intervention groups using the \emph{Mask} Strategy: first synthesizing one sample with altered core semantics (deep structure), then generating another with only surface structure changes based on this. This process is performed once, yielding $2$ intervention samples per original sample: $1$ with deep structure changes and $1$ with corresponding surface structure modifications. In total, for LLM $f_{\theta}$, $2000\alpha$ intervention samples are generated ($2$ per original sample).
    \item  Analytic Entailment: For Analytic Entailment, the original dataset comprise $70$ samples. Following Algorithm \ref{alg:InterventionDataGen} and Algorithm \ref{alg:RephraseByAgent}, we conduct interventions on correctly answered samples for each LLM with accuracy $\alpha$. For each sample, we apply two intervention groups using the \emph{Rephrase} Strategy: first synthesizing one sample with altered core semantics (deep structure), then generating another with only surface structure changes based on this. This process is repeated twice, yielding $4$ intervention samples per original sample: $2$ with deep structure changes and $2$ with corresponding surface structure modifications. In total, for LLM $f_{\theta}$, $280\alpha$ intervention samples are generated ($4$ per original sample).
    \item  CommonsenseQA: For CommonsenseQA, the original dataset contain $1221$ samples. Following Algorithm \ref{alg:InterventionDataGen}, we conduct interventions on correctly answered samples for each LLM with accuracy $\alpha$. For each sample, we apply two intervention groups using the \emph{Rephrase} Strategy: first synthesizing one sample with altered core semantics (deep structure), then generating another with only surface structure changes based on this. This process is repeated twice, yielding $4$ intervention samples per original sample: $2$ with deep structure changes and $2$ with corresponding surface structure modifications. In total, for LLM $f_{\theta}$, $4884\alpha$ intervention samples are generated (4 per original sample).
\end{itemize}

\subsection{Random Weighted Baseline}
\label{app:Random Weighted Baseline}
We employ \verb|AutoModelForCausalLM.from_config| to load a new model with an model architecture identical to LLama-3-70b but with randomly initialized weights as our baseline. This random baseline model is incapable of comprehending the task, let alone making random guesses. We provide examples of its output as follows:

\begin{table}[h]
	\centering
	\caption{The baseline with random weights maintains the same architecture as LLama-3-70b but is incapable of comprehending specific problems, let alone making random guesses. Consequently, both its accuracy and ADCE are zero.}
 \label{tab:response of random model}
 \vspace{0pt} 
	\begin{tabular}{m{0.95\linewidth}}
		\toprule
		\textbf{[Task 1] 2-Digit Multiplication:}\\
		Input: You are a helpful assistant for arithmetic reasoning. You are required to answer the following question and answer in the format of 'The answer is'. What is 50 times 20? \\
		Output: MTatual novemberdxanime alqun =filestitutions Riveraixe Nga juego.Note\\
		\midrule
		\textbf{[Task 2] GSM8k}\\
		Input: You are a helpful assistant for arithmetic reasoning. You are required to answer the following question and answer in the format 'The answer is'. Kyle bought last year's best-selling book for \$19.50. This is with a 25\% discount from the original price. What was the original price of the book? \\
		Output: ulseries/sys tab ArapDAMAGEestyIILISE Mayboruev ance Signatureappeno\\
		\midrule
		\textbf{[Task 3] Word Unscrambling}\\
		Input: As a linguistics expert, you will be provided with scrambled letters and identify the correct English word that they can form. Answer in the format 'The unscrambled word is'. The word hte is a scrambled version of the English word \\
		Output: Sellertheseevilervadirs estruct mparator]]);=>
maxugas\\
  \midrule
  \textbf{[Task 4] Analytic Entailment} \\
  	Input: Determine whether the following pairs of sentences embody an entailment relation or not. Some people ordered beverages. So no one ordered orange juice. \\
	Output: shortly airline (bodyParservenue Rapids (slugotrab coeffvelocity\\
    \midrule
  \textbf{[Task 5] CommonsenseQA} \\
 	Input:You are an expert in natural language processing and commonsense reasoning. Below is a commonsense question along with some answer options. Choose the correct answer from these options. Kyle bought last year's best-selling book for \$19.50. This is with a 25\% discount from the original price. What was the original price of the book? \\
		Output: ROSS Residents.radfrom processesSi nouvel Full)[PIE()" DVD=centeryyy\\
		\bottomrule
	\end{tabular}%
\end{table}%

\section{Fine-tuning on Analytic Entailment Dataset}
\label{app:details on sft}

\subsection{Supervised Fine-tuning on Analytic Entailment Dataset}
\label{app:sft}

To fine-tune the llama-based models, we utilize the \texttt{llama-recipes} library\footnote{\url{https://github.com/meta-llama/llama-recipes}} and train the models on a cloud server with 2 NVIDIA Tesla A100 GPUs with 80G memory of each. We employ LoRA \citep{hu2022lora} technique from the \texttt{peft} library\footnote{\url{https://huggingface.co/docs/peft}} for memory-efficient training. 

For Analytic Entailment dataset, we include the generated paraphrases for training and evaluation. For each question, we generate two sets of paraphrases as depicted in \cref{app:Details on Intervention}, with each set include one ($T=1,s(T=1)$) sample and ($T=0,s(T=0)$) sample. 
Based on this, we expanded our dataset from $70$ original samples to a total of $350$ samples, with each set comprising one original sample and four corresponding paraphrases. We then divided these $70$ sets for training and testing with a ratio of $6:4$. Consequently, we obtained a training set consisting of $210$ samples derived from $42$ original samples and a test set comprising $140$ samples, which were derived from the intervention on $28$ original samples.

We set the batch size to be $20$ and set the learning rate to be $0.0003$ for both \texttt{llama-3-8b} and \texttt{llama-3-70b}. For other parameters, we use the default value as defined in the \href{https://github.com/meta-llama/llama-recipes/blob/main/src/llama_recipes/finetuning.py}{official code} from \texttt{llama-recipes} library. We train the models until convergence, and both \texttt{llama-3-8b} and \texttt{llama-3-70b} converge within $200$ steps.

{\subsection{More Post training Strategies}
\label{app:more post training}
In this section, we have expanded our analysis to include two additional post-training approaches: Instruction Fine-Tuning (IFT) \citep{wei2021finetuned} and Fine-Tuning with In-Context Learning (FTICL) \citep{anil2022exploring}. We've also analyzed the In-Context Learning (ICL) \citep{brown2020language} method, due to its effectiveness in harnessing the models' inherent abilities to comprehend and produce responses, as well as its popularity within the NLP community. Following the experimental setting in Section \ref{section:Origin of Deep Structure Understanding}, we also consider Llama-3-8b on the Analytic Entailment task. Specifically, for IFT, we augment each input text with the following template:
}

\begin{table}[h]
    \centering
    \caption{{The prompt for IFT.  We consider the performance of LLama-3-8b on the Analytic Entailment task.}}
    \vspace{-0em} 
    \label{table:template-post}
    \begin{tabular}{p{0.95\linewidth}}
        \toprule
        \textbf{Template for IFT} \\
        \midrule
        As an expert in linguistic entailment, you will be provided with two sentences and determine if there is an entailment relationship between sentence 1 and sentence 2. An entailment relationship exists when the truth of sentence 1 guarantees the truth of sentence 2.

        \vspace{0.5ex}
        \textbf{Sentences:}
        \texttt{[INPUT]}

        \vspace{0.5ex}
        \textbf{Relation:} (entailment or no-entailment): \\
        \bottomrule
    \end{tabular}
\end{table}

{Here, \texttt{[INPUT]} will be replaced by the input text. In addition to the instructions used in IFT, for FTICL, we incorporate two examples with corresponding ground truth into the template:}
\begin{table}[h]
    \centering
    \caption{{The prompt for FTICL.  We consider the performance of LLama-3-8b on the Analytic Entailment task.}}
    \vspace{-0.0em}  
    \label{table:template-post-1}
    \begin{tabular}{p{0.95\linewidth}}
        \toprule
        \textbf{Template for FTICL.} \\
        \midrule
        As an expert in linguistic entailment, you will be provided with two sentences and determine if there is an entailment relationship between sentence 1 and sentence 2. An entailment relationship exists when the truth of sentence 1 guarantees the truth of sentence 2.

        \vspace{0.5ex}
        \textbf{Sentences:}
        \texttt{[INPUT]}

        \vspace{0.5ex}
        \textbf{Relation:} (entailment or no-entailment): \\
        \bottomrule
    \end{tabular}
\end{table}

{For ICL, we utilize the same sample template as in FTICL. The key difference is that ICL does not involve finetuning the models; instead, it employs this template solely for evaluation purposes.
The results are provided below:}

\begin{table}[h]
    \centering
    \caption{{Comparison of different metrics across various training stages. We consider the performance of LLama-3-8b on the Analytic Entailment task. }}
    \label{tab:metrics_comparison}
    \begin{tabular}{lccccc}
        \toprule
        Metric & Pre-training & SFT & IFT & FTICL & ICL \\
        \midrule
        Accuracy & 0.457 & 0.743 & 0.800 & 0.786 & 0.771 \\
        ADCE & -0.071 & 0.318 & 0.478 & 0.533 & 0.455 \\
        \bottomrule
    \end{tabular}
\end{table}
{We find that various post-training strategies and ICL all lead to improvements in both model accuracy and deep structure understanding ability (ADCE). Moreover, FTICL and IFT, which consider both prompt engineering and parameter optimization, yield greater gains compared to SFT, which only focuses on parameter optimization, or ICL, which only utilizes prompts.}
\section{Experimental Details on Spurious Correlation}
\label{app:spurious correlation}
\begin{figure}[t!]
\centering
\begin{minipage}{.48\textwidth}
  \centering
  \includegraphics[width=0.8\linewidth]{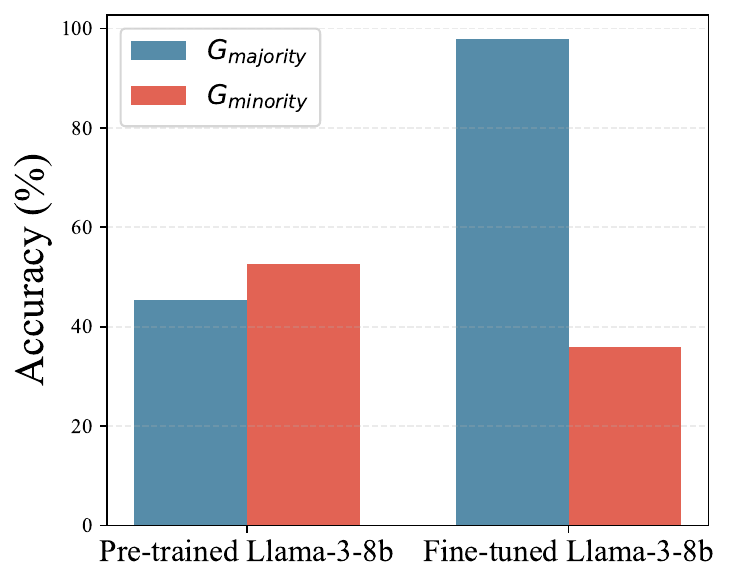}
  \caption{SFT on LLama-3-8b}
  \label{fig:llama3-8b}
\end{minipage}%
\hfill
\begin{minipage}{.48\textwidth}
  \centering
  \includegraphics[width=0.8\linewidth]{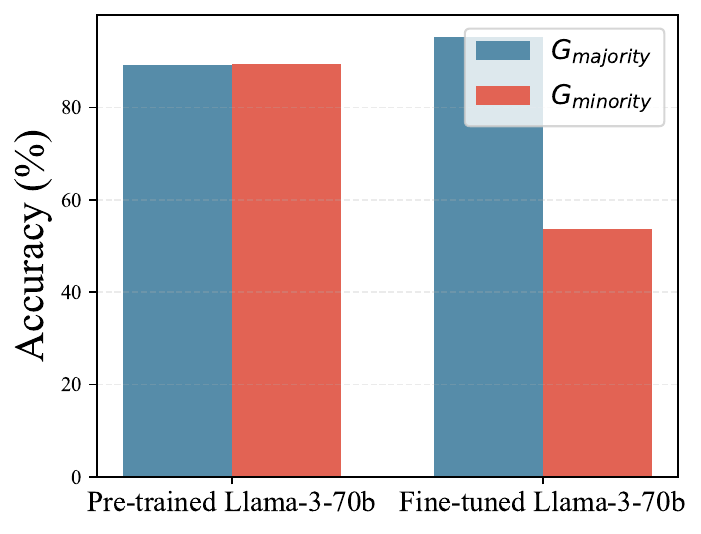}
  \caption{SFT on LLama-3-70b}
  \label{fig:llama3-70b}
\end{minipage}
\caption{Introducing spurious correlations into the initially unbiased LLama-3 series through fine-tuning, with spurious level $n_{\mathrm{majority}}=100$}
\label{fig:llama3-comparison}
\end{figure}

\textbf{Construction of Spurious Correlation Data.} We initially sample from Civilcomments to construct training datasets with varying degrees of spurious correlations. The sampling procedure selects 2500 extreme samples with toxicity probability $>0.8$ and containing identity, assigning label 1 (toxic), and 2500 extreme samples with toxicity probability $<0.2$, assigning label 0 (non-toxic) for the majority group with spurious correlations. For the minority group without spurious correlations, we select samples with toxicity probability $>0.5$ and no identity, assigning label 1, and samples with toxicity probability $<0.5$ and containing identity, assigning label 0. We adjust the proportion of the majority group while maintaining a total sample size of 4526. For instance, a 50\% majority group implies 2263 samples each in the majority and minority groups. We consider four settings with increasingly spurious correlations level, where $n_{\mathrm{majority}}$ accounts for 50\%, 70\%, 90\%, and 100\% of the total samples. For the test data, after sampling the training set, we apply the same sampling rules to the remaining population. We select 200 samples each from the majority and minority groups within this population. We then employ the rephrase method proposed in Algorithm \ref{alg:RephraseByAgent} to construct intervention data for accuracy and DCE.

\textbf{Fine-tuning on Spurious Correlation Data.} We set the batch size to be $50$, and set the learning rate to be $0.001$ and $0.0003$ for \texttt{llama-3-8b} and \texttt{llama-3-70b}, respectively. For other parameters, we use the default value as defined in the \href{https://github.com/meta-llama/llama-recipes/blob/main/src/llama_recipes/finetuning.py}{official code} from \texttt{llama-recipes} library. We train the models until convergence. In all training cases, the models converge within $250$ steps.

\section{Experiments on Noisy Data}
\label{app:Experiments on Noisy Data}
{In this section, we extend our experiments to NLP tasks with noisy data. We consider two scenarios: text noise \citep{belinkov2017synthetic,karpukhin2019training,wei2019eda} and label noise \citep{garg2021towards,wu2023noisywikihow}. For demonstration, we use the 2-digit Multiplication dataset and LLama-3-8b model as an example.}

{\textbf{Text Noise.} For each word in the input text, we randomly apply one of three noise-adding methods: a) Typo: Replace a random character with a random lowercase letter. b) Extra: Insert a random lowercase letter at a random position. c) Missing: Delete a random character. We gradually increase the noise level $\eta$. For instance, $\eta=0.9$ means each word has a 90\% probability of modification, indicating higher text corruption. Experimental results are as shown in Table \ref{table:eta_values_1}. }
\begin{table}
\centering
\caption{{Values of Accuracy, ADCE, and AICE for different noise levels $\eta$ on data with text noise.}}

\label{table:eta_values_1}
\begin{tabular}{cccc}
\toprule
$\eta$ & Accuracy & ADCE & AICE \\
\midrule
0    & 0.710 & 0.733 & 0.264 \\
0.2  & 0.497 & 0.681 & 0.319 \\
0.5  & 0.201 & 0.550 & 0.448 \\
0.7  & 0.093 & 0.438 & 0.556 \\
0.9  & 0.031 & 0.444 & 0.556 \\
\bottomrule
\end{tabular}
\end{table}

{We find that as $\eta$ increases, both ADCE and accuracy decrease, while AICE increases. It possible that noise likely disrupts deep structural information, forcing the model to depend on more accessible, surface-level information. This shift results in lower ADCE and higher AICE.}

{\textbf{Label Noise.} For the 2-digit Multiplication multiple-choice dataset, we randomly select an incorrect answer as the new correct answer. And the noise level $\eta=0.9$ means 90\% of sample labels are modified. Experimental results are as shown in Table \ref{table:eta_values_2}.}

\begin{table}
\centering
\caption{{Values of Accuracy, ADCE, and AICE for different noise levels $\eta$ on data with label noise.}}
\label{table:eta_values_2}
\begin{tabular}{cccc}
\toprule
$\eta$ & Accuracy & ADCE & AICE \\
\midrule
0    & 0.710 & 0.733 & 0.264 \\
0.2  & 0.497 & 0.681 & 0.319 \\
0.5  & 0.201 & 0.550 & 0.448 \\
0.7  & 0.093 & 0.438 & 0.556 \\
0.9  & 0.031 & 0.444 & 0.556 \\
\bottomrule
\end{tabular}
\end{table}

{We observe that ADCE and AICE are more robust to label noise than accuracy, showing no significant changes as noise increases. Possible reasons are (1) ADCE and AICE evaluations are based on correctly answered questions, potentially filtering out mislabeled samples before intervention. (2) Crucially, ADCE and AICE measure relative changes in model outputs pre- and post-intervention, not label accuracy as stated in Equation \ref{eq:black-DCE}. Thus, they effectively reflect LLMs' reliance on deep or surface structures, even with label noise, provided the model shows consistent relative differences pre- and post-intervention.}


\end{document}